\documentclass[runningheads]{llncs}
\usepackage[T1]{fontenc}

\usepackage{algorithm}
\usepackage[noend]{algpseudocode}
\usepackage[inline]{enumitem}
\usepackage{mathtools}
\usepackage{dsfont}
\usepackage{todonotes}

\usepackage{amsmath}
\usepackage{amssymb}
\usepackage{tikz}
\usetikzlibrary{positioning}

\newfloat{listing}{htbp}{lop}
\floatname{listing}{Listing}

\usepackage{graphicx}

\usepackage{hyperref}
\usepackage[capitalise]{cleveref}
\usepackage{booktabs}
\usepackage{listings}
\usepackage{float}

\AddToHook{cmd/appendix/before}{\crefalias{section}{appendix}}
\AddToHook{cmd/appendix/before}{\crefalias{subsection}{appendix}}

\crefname{definition}{Def.}{Defs.}
\Crefname{definition}{Definition}{Definitions}
\crefname{appendix}{App.}{Apps.}
\Crefname{appendix}{Appendix}{Appendices}
\crefname{theorem}{Thm.}{Thms.}
\Crefname{theorem}{Theorem}{Theorems}
\crefname{algorithm}{Alg.}{Algs.}
\Crefname{algorithm}{Algorithm}{Algorithms}

\newcommand{\ran}{\ensuremath{\mathrm{ran}}}

\begin{document}

\title{The Curious Case of Exploding DecPOMDPs:\\ Containing the Fire through Policy Counting\thanks{Full version including appendix of a paper accepted at the 17th International Conference on Scalable Uncertainty Management (SUM2026) under the same name.}}
\titlerunning{The Curious Case of Exploding DecPOMDPs}
\author{Nazl\i\ Nur Karabulut\orcidID{0000-0001-7958-5627} \and
Tanya Braun\orcidID{0000-0003-0282-4284} }
\authorrunning{N.\ N.\ Karabulut and T. Braun}
\institute{Computer Science Department, University of Münster, Münster, Germany 
\email{\{nnur.karabulut,tanya.braun\}@uni-muenster.de}
}
\maketitle
\begin{abstract}
Decentralised partially observable Markov decision processes (DecPOMDPs) provide a general framework for modelling multi-agent decision making under uncertainty.
However, DecPOMDPs are known to suffer from exponential complexity in the number of agents.
One way to combat this intractability in agent numbers is to look at partitions of agents that exhibit a form of symmetry among agents, allowing for a compact encoding by counting.
However, a challenge arises as the policy space explodes, even though the model complexity and evaluation cost reduce to a polynomial dependence. 
In this paper, we redirect our focus from counting agents to counting policies, which actually enables tractability in agent numbers for so called policy-counted DecPOMDPs.
Further, we present policy-counted dynamic programming using the compact representation to solve policy-counted DecPOMDPs efficiently.
\end{abstract}
\section{Introduction}
Large-scale agent systems involve a huge number of interacting agents that have to coordinate decisions in uncertain environments. 
To formalise the setting, decentralised partially observable Markov decision processes (DecPOMDPs) provide a framework to compute a joint policy for a set of cooperative agents that maximises some joint utility function in an environment that is considered to be stationary and describable by a probabilistic transition model. 
When agents are self-interested, the problem is formalised as a partially observable stochastic game (POSG), in which each agent has its own utility function.

However, solving multi-agent decision making problems is notoriously difficult as the number of potential policies escalates exponentially with the number of agents.
To deal with this problem, researchers have proposed various approaches:
Early work focuses on dynamic programming using pruning techniques for POSGs \cite{HanBeZu04}.
Building on this basis, Szer et al.\ present point-based dynamic programming for DecPOMDPs \cite{szer2006point} and Seuken et al.\ propose memory-bounded dynamic programming \cite{seuken2007memory}.
The dynamic programming operator was later extended with approximate pruning techniques to improve scalability for POSGs \cite{KumZi09}.
In addition to dynamic programming–based approaches, Szer et al.\ \cite{szer2012maa} propose multiagent A* (MAA*)\pagebreak, a policy search algorithm that groups action–observation histories of agents at the same stage when they share the same optimal Q-values in a Bayesian game formulation of the problem \cite{oliehoek2009lossless}.
Newer work optimises MAA* even further \cite{koops2023recursive,koops2024approximate}.
Additional research has examined more structured subclasses to improve scalability, e.g., for POSGs, examples include zero-sum formulations \cite{WigOlRo16,HorBo19,TomHoArBoCh21}, one-sided POSGs where only one agent is affected by uncertainty \cite{HorBoPe17,HorBoKiKa19,CarJaBhSpTo21}, and common-payoff POSGs that restrict agents to share identical rewards \cite{EmeGoScTh04}.

Another lane of work focuses on so-called lifting \cite{Poo03}.
Lifting exploits symmetries between objects, also sometimes referred to as indistinguishability or exchangeability, which allows for tractable inference \cite{NieBr14} and has been used to great extent in probabilistic inference (e.g., \cite{MilZeKeHaKa08,BroTaMeDaRa11,TagFiDaBl13,BraMo16a}) and single-agent decision making (e.g., \cite{SanBo09,SanKe10,MarBrMoGe26}).
In multi-agent systems, lifting is applied to agents, allowing for a more compact problem formulation using partitions of agents.
The basic assumption, first described by Braun et al.\ \cite{BraGeLaMo22}, is that, within a partition, permutations of joint actions and observations have the same effect on the joint transition or reward, allowing for counting actions and observations in histograms.
Adding in an assumption about conditional independence among agents of a partition even allows for working with a single representative agent per partition without counting necessary, leading to a drastic reduction in complexity for so-called isomorphic DecPOMDPs. 
However, without that assumption, a reduction can only be made in model complexity and evaluation cost, while ---almost counter-intuitively--- the policy space explodes \cite{BraGeLaMo22}. 
More recently, we have presented isomorphic POSGs and lifted dynamic programming to work with the compact encoding, yielding a runtime polynomial in the number of agents \cite{karabulut2024lifting}.
We have extended this work to counting policies to circumvent an explosion in the policy space for partitioned POSGs \cite{karabulut2025counting}.

Invigorated by the progress made in counting POSGs, this work takes on the case of exploding DecPOMDPs under counting, a challenge left open by Braun et al.\ \cite{BraGeLaMo22} by moving the counting focus to policies. 
Specifically, the contributions are threefold:
\begin{enumerate*}[label=(\roman*)]
    \item a new definition of counting DecPOMDPs that allows for counting policies, preventing the explosion in the policy space,
    \item an updated utility calculation including an analysis that shows tractability in agent numbers, and
    \item a dynamic programming operator to solve policy-counted DecPOMDPs.
\end{enumerate*}
As such, we solve the above-mentioned challenge and additionally provide a dedicated solution method.
When showing tractability, we assume a fixed number of partitions that is much smaller than the number of agents.

The paper is organised as follows: 
We start with DecPOMDPs and dynamic programming.
Then we analyse the problem of exploding DecPOMDPs and show how to use counting for policies to circumvent the problem, followed by counting dynamic programming.
We end with a conclusion. 

\section{Preliminaries}
In this section, we define DecPOMDPs and provide an overview of dynamic programming for DecPOMDPs. 

\subsection{Decentralised POMDPs}
The definition is based on \cite{OliAm16} and \cite{RusNo21}, using random variables $S$ with $\ran(S)$ referring to the set of (range) values it can take. 
Bold symbols denote sets, (time) steps $t$ are denoted by superscript $t$, sequences over a discrete interval $[t_s:t_e]$ are denoted by superscript ${t_s:t_e}$, and omitting an element $i$ from a set or sequence is denoted by subscript $-i$.
To add an element to a sequence, we use $\circ$.
\begin{definition}\label{def:decpomdp:gr}
	A (\emph{ground}) \emph{DecPOMDP} $M$ is a tuple $(\boldsymbol{I},\allowbreak S,\allowbreak  \boldsymbol{A},\allowbreak T,\allowbreak R, \allowbreak \boldsymbol{O}, \varOmega, \tau )$, 
	\begin{itemize}
	   \item $\boldsymbol{I}$ a set of $N$ agents, 
		\item $S$ a random variable with a set of states as a range,
        \item $\boldsymbol{A} = \{A_i\}_{i\in \boldsymbol{I}}$ a set of decision random variables, each $A_i$ with a set of local actions as range,
        \item $T(S', S, \boldsymbol{A}) = P(S' \mid S, \boldsymbol{A})$ a transition function,
        \item $R(S, \boldsymbol{A})$ a reward function,
        \item $\boldsymbol{O} = \{O_i\}_{i\in \boldsymbol{I}}$ a set of random variables, each $O_i$ with a set of local observations as range,  
        \item $\varOmega(\boldsymbol{O}, S) = P(\boldsymbol{O} \mid S)$ a sensor function\footnote{For ease of exposition regarding counting and to be consistent with \cite{BraGeLaMo22}, we omit $\boldsymbol{A}$ from $\varOmega$, which does not make the problem inherently easier to solve \cite{RusNo21}.}, and
        \item $\tau$ a horizon. 
	\end{itemize}
\end{definition}
Each agent $i$ follows a local policy $\pi_i^t$ mapping local observation histories $o^{0:t}$ to actions $a$, which can be represented as a depth-$t$ policy tree (see \cref{fig:policies}). 
A joint policy is a tuple $ \boldsymbol{\pi}=(\pi_1, ..., \pi_n)$.
The semantics of the ground model $M$ is given by the joint policy space $\varPi_M$.
The DecPOMDP problem asks for the joint policy that yields the maximum expected utility (with horizon $\tau$):
\begin{align}
    MEU (M) = (\operatorname*{arg\;max}_{\boldsymbol{\pi} \in \varPi_M}  U_{M}(\boldsymbol{\pi}) , \max_{\boldsymbol{\pi} \in \varPi_M}  U_{M}(\boldsymbol{\pi}))  \label{meuforgroundcase}
\end{align}
where $U_{M}(\boldsymbol{\pi})$ is calculated recursively over $\tau$ steps, by summing over joint observations and next states, and then weighted according to the prior $T(s^0, .\ , .\ )$, i.e., $U_M(\boldsymbol{\pi}) = \sum_{s^0 \in \ran(S)} T(s^0, .\  ,.\ )U_M ^{\boldsymbol{\pi}} (s^0, \bot^{0:0})$ with
\begin{align}\label{eq:eu:gr}
    \begin{split}
        U_M ^{\boldsymbol{\pi}}(s^t, \boldsymbol{o}^{0:t}) = R(s_t, \boldsymbol{\pi}(\boldsymbol{o}^{0:t})) +& \sum_{s^{t+1} \in \ran(S)} T(s^{t+1}, s^t, \boldsymbol{\pi}(\boldsymbol{o}^{0:t})) \\
        &\ \ \ \sum_{\boldsymbol{o}^{t+1} \in \ran(\boldsymbol{O})}
        \varOmega(\boldsymbol{o}^{t+1},s^{t+1}) U_M ^{\boldsymbol{\pi}} (s^{t+1}, \boldsymbol{o}^{0:t+1})
    \end{split}
\end{align}
ending with 
$U_M ^{\boldsymbol{\pi}}(s, \boldsymbol{o}^{0:\tau-1}) = R(s, \boldsymbol{\pi}(o^{0:\tau-1}))$, where $\boldsymbol{\pi}(\boldsymbol{o}^{0:t})$ denotes the joint action at step $t$ ($\boldsymbol{\pi}(\bot)$: root actions) and $\boldsymbol{o}^{0:t+1} = \boldsymbol{o}^{0:t} \circ \boldsymbol{o}^{t+1}$ the updated history.

In a DecPOMDP, all agents share a single joint reward function $R(S, \boldsymbol{A})$ and cooperate to maximize the total expected utility.
However, each agent decides what to do next on its own, not being able to observe the full state or the other agents' observations. 
Nonetheless, agents are not fully independent since they depend on a joint state and receive the reward jointly.

\begin{figure}[tb]
    \centering
    \begin{tikzpicture}[act/.style={draw,circle, minimum width=6mm, node distance=9mm}]
        \node[act] (a0) {a};
        \node[act, below left of=a0] (a01) {a};
        \node[act, below right of=a0] (a02) {a};
        \draw[->] (a0) -- node[above left] {x} (a01);
        \draw[->] (a0) -- node[above right] {y} (a02);

        \node[act, right of=a0, xshift=12mm] (a1) {a};
        \node[act, below left of=a1] (a11) {a};
        \node[act, below right of=a1] (a12) {b};
        \draw[->] (a1) -- node[above left] {x} (a11);
        \draw[->] (a1) -- node[above right] {y} (a12);

        \node[act, right of=a1, xshift=12mm] (a2) {a};
        \node[act, below left of=a2] (a21) {b};
        \node[act, below right of=a2] (a22) {a};
        \draw[->] (a2) -- node[above left] {x} (a21);
        \draw[->] (a2) -- node[above right] {y} (a22);

        \node[act, right of=a2, xshift=12mm] (a3) {a};
        \node[act, below left of=a3] (a31) {b};
        \node[act, below right of=a3] (a32) {b};
        \draw[->] (a3) -- node[above left] {x} (a31);
        \draw[->] (a3) -- node[above right] {y} (a32);

        \node[act, below of=a0, yshift=-5mm] (a4) {b};
        \node[act, below left of=a4] (a41) {a};
        \node[act, below right of=a4] (a42) {a};
        \draw[->] (a4) -- node[above left] {x} (a41);
        \draw[->] (a4) -- node[above right] {y} (a42);

        \node[act, below of=a1, yshift=-5mm] (a5) {b};
        \node[act, below left of=a5] (a51) {a};
        \node[act, below right of=a5] (a52) {b};
        \draw[->] (a5) -- node[above left] {x} (a51);
        \draw[->] (a5) -- node[above right] {y} (a52);

        \node[act, below of=a2, yshift=-5mm] (a6) {b};
        \node[act, below left of=a6] (a61) {b};
        \node[act, below right of=a6] (a62) {a};
        \draw[->] (a6) -- node[above left] {x} (a61);
        \draw[->] (a6) -- node[above right] {y} (a62);

        \node[act, below of=a3, yshift=-5mm] (a7) {b};
        \node[act, below left of=a7] (a71) {b};
        \node[act, below right of=a7] (a72) {b};
        \draw[->] (a7) -- node[above left] {x} (a71);
        \draw[->] (a7) -- node[above right] {y} (a72);

    \end{tikzpicture}
    \caption{Policies at level $t=1$ available to agents with actions $a,b$ and observations $x,y$}
    \label{fig:policies}
\end{figure}

\subsection{Dynamic Programming}
Dynamic programming is a solution method for multi-agent decision making that is shared between DecPOMDPs and POSGs \cite{OliAm16}.
It builds increasingly deeper policy trees from sub-trees for individual agents, while eliminating weakly-dom\-i\-nat\-ed policies at each depth \cite{szer2006point}.
\Cref{alg:dp} shows the dynamic programming operator which consists of three steps.
First, an exhaustive backup is performed to build all possible policies $\varPi_i^{t} = \{\pi_{i,j}^{t}\}_{j=1}^{m_t}$ of depth $t$ from the input policies $\varPi_i^{t-1} = \{\pi_{i,j}^{t-1}\}_{j=1}^{m_{t-1}}$ for each agent $i$.
Then, for each policy $\pi_{i,j}^t \in \varPi_i^t$, the operator calculates the corresponding value vector $V_{i,j}^{t}$, representing the value of that policy for each possible combination of state $s \in \ran(S)$ and policies of the other agents $\boldsymbol{\pi}_{-i}^t \in \boldsymbol{\varPi}_{-i}^{t}$, which is defined as follows with $\boldsymbol{\pi} = \pi_{i,j}^t \circ \boldsymbol{\pi}_{-i}$
\begin{align}
\begin{split}
	V_i(s,\boldsymbol{\pi}) = R(s, \boldsymbol{\pi}(\bot)) +
    {\sum_{s' \in \ran(S)}} T(s', s,\boldsymbol{\pi}(\bot)) 
    {\sum_{\boldsymbol{o} \in \ran(\boldsymbol{O})}} \varOmega(\boldsymbol{o}, s)  V_i(s',\boldsymbol{\pi}.\boldsymbol{o}) \label{eq:valuef}
\end{split}
\end{align}
where $\boldsymbol{\pi}.\boldsymbol{o}$ denotes the sub-policies after following an observation.
During pruning, the operator eliminates policies that are weakly-dominated by others (solvable by a linear programme, see \cref{app:lp:gr}), meaning that over the complete space of $\ran(S) \times \boldsymbol{\varPi}_{-i}^{t}$ there is always another policy with at least as high a value, i.e., \begin{align}
    \begin{split}
	\forall s\in \ran(S), \boldsymbol{\pi}_{-i}^{t} \in \boldsymbol{\varPi}_{-i}^t \exists {\pi}_{i,j'}^{t} \in \varPi_i^{t} : 
    V_i(s, {\pi}_{i,j}^{t} \circ \boldsymbol{\pi}_{-i}^{t}) \leq V_i(s,{\pi}_{i,j'}^{t} \circ \boldsymbol{\pi}_{-i}^{t}) \label{eq:prune}
    \end{split}
\end{align}

\begin{algorithm}[tb]
    \caption{Multi-agent Dynamic Programming Operator}
    \label{alg:dp}
    \begin{algorithmic}
        \Function{MA-DP}{set of policies $\varPi_i^{t-1}$ for each agent $i\in \boldsymbol{I}$ with value vectors $\boldsymbol{V}_i^{t-1}$}
            \For {each agent $i\in \boldsymbol{I}$}
                \State $\varPi_i^{t} \gets $ Perform exhaustive backup using $\varPi_i^{t-1}$ 
                \State $\boldsymbol{V}_i^{t} \gets $ Calculate new value vectors 
            \EndFor
			\While {$\exists \pi_{i,j}^{t} \in \varPi_i^{t} : $ \cref{eq:prune} holds for some $i \in \boldsymbol{I}$}
				\State $\varPi_i^{t} \gets \varPi_i^{t} \setminus \{\pi_{i,j}^{t}\}$, $\boldsymbol{V}_i^{t} \gets \boldsymbol{V}_i^{t} \setminus \{v_{i,j}^t\}$
			\EndWhile
			\State \Return $\{(\varPi_i^t,\boldsymbol{V}_i^t)\}_{i\in \boldsymbol{I}}$
		\EndFunction
	\end{algorithmic}
\end{algorithm}

\section{A Case of Exploding DecPOMDPs}
\label{sec:explode}
The basic idea of lifting DecPOMDPs is that the agent set is partitioned into sets of agents that behave indistinguishably among each other, meaning that it does not matter which agent performs which action within a partition, only how many do so \cite{Poo03,MilZeKeHaKa08,BraGeLaMo22}.
In such a case, it is enough to count the agents and store the counts in a histogram.
To illustrate the point, consider $N=12$ indistinguishable agents with the actions $a, b$ available as an example.
Imagine that $8$ agents perform $a$ while the remaining $4$ perform $b$.
Then, there exist $ \binom{12}{8} = \frac{12!}{8! \cdot 4!} = 495$ ways to make the $12$ agents perform the actions, all of which result in the same outcome $\rho$. 
Therefore, we can construct a histogram $[4,8]$ that maps to $\rho$, replacing $495$ mappings.
While this leads to a drastic decrease in model complexity, the policy space explodes in the formalisation chosen in \cite{BraGeLaMo22}.
To understand why, we next discuss the assumptions and design choices, left at times implicit in the original paper.
Then, we look at the (in)tractability results.

\subsection{Assumptions \& Definitions}
Braun et al.\ \cite{BraGeLaMo22} name as their basic assumption indistinguishability among agents.
They posit that indistinguishability requires agents to have the same local actions and observations, and that another consequence is that permutations of actions (observations) map to the same outcome in transition, reward, and sensor function, which we posit boils down to two assumptions, which we formalise next.

The first assumption is that all agents $i,j$ in a partition $\mathfrak{I} \subseteq \boldsymbol{I}$ have the same action and observation space:
\begin{equation}
	\ran(A_i) = \ran(A_j) \wedge \ran(O_i) = \ran(O_j) \label{eq:assumpt:1}
\end{equation}
which follows the requirement definition in \cite{BraGeLaMo22}.
The second assumption is that the transition, reward, and sensor functions $T$, $R$, and $\varOmega$ are symmetric, meaning that exchanging actions or observations between agents of a partition has no effect on the outcome of $T$, $R$, and $\varOmega$, which can be formally expressed for the actions of two agents as:
\begin{align}
	\shortintertext{$\forall \boldsymbol{a} \in \ran(\boldsymbol{A}), \boldsymbol{a} = (a_i \circ a_j \circ \boldsymbol{a}_{-i,-j}) : $} 
	\begin{split}
		T(s', s, {a}_i \circ {a}_j \circ \boldsymbol{a}_{-i,-j}) &= T(s', s, a_j \circ a_i \circ \boldsymbol{a}_{-i,-j})
\label{eq:assumpt:2}
	\end{split}
\end{align}
The same holds for the actions in $R$ and the observations 
in $\varOmega$.
We now generalise \cref{eq:assumpt:2} to entire partitions, to build interchangeability among agents within a partition.
The set of agents $\boldsymbol{I}$ is divided into $K$ partitions $\mathfrak{I}_1, ...,\mathfrak{I}_K $ of indistinguishable agents.
For each partition $\mathfrak{I}_k$, let $\boldsymbol{a}_k = (a_i)_{i \in \mathfrak{I}_k}$ and $\boldsymbol{o}_k = (o_i)_{i \in \mathfrak{I}_k}$ denote the joint action and observation tuples of the agents in that partition. 
With the extension of symmetric behaviour to partitions, swapping the actions or observations of agents within the same partition does not change a function's outcome.
Formally, for any permutation $\theta$ of $\mathfrak{I}_k$,
\begin{align}
    \begin{split}
    T(s', s, \boldsymbol{a}_k, \boldsymbol{a}_{-k}) &= T(s', s, \theta(\boldsymbol{a}_k),\boldsymbol{a}_{-k})  \boldsymbol{o}_{-k},s).  
    \label{eq:asst2:counting}
    \end{split}
\end{align}
with the same holding for any permutations of actions in $T$ and observations in $\varOmega$.
These assumptions are analogous to those made for counting in POSGs \cite{karabulut2025counting}.

\Cref{eq:assumpt:1,eq:asst2:counting} allow for encoding actions and observations using histograms, which can then be used to compactly encode $T$, $R$, and $\varOmega$ by replacing the individual mappings from all $\theta(\boldsymbol{a}_k)$ to an outcome $\rho$ with a single mapping $h \mapsto \rho$, with $h$ being a histogram of the counts of actions in $\boldsymbol{a}_k$, which is identical for all $\theta(\boldsymbol{a}_k)$.
The same holds for the permutations of observations in $\boldsymbol{o}_k$.
Formally, histograms are defined as follows: 
\begin{definition}
    Given a set of $n$ agents $\boldsymbol{I}$ and a set of $m$ values $\nu = \{v_1, \dots, v_m\}$, a histogram $h$ is defined as $h=\{(v_l,n_l)\}_{l=1}^m$ with $\sum_{l=1} ^{m} n_l = n $ where each $n_l$ denotes the number of instances with value $v_l$.
	We use $h(v_l)$ to refer to $n_l$.
$[n_1, ... ,n_m]$ is used as a shorthand.
	A counting random variable (CRV) $\#_{\boldsymbol{I}} [V]$ is a syntactic construct whose range consists of all histograms over the range values of variable $V$, i.e., $\nu=\ran(V)$, that meet $\sum_l n_l = |\boldsymbol{I}|$. 
\end{definition}
Considering the shorthand example $h=[4,8]$ from above for actions $a,b$ and $n=12$, the full version reads $h=\{(a,8),(b,4)\}$.

Braun et al.\ \cite{BraGeLaMo22} then define a partitioned DecPOMDP that uses such histograms as range values for actions and observations in each partition for every affected component of the DecPOMDP.

\begin{definition}[\cite{BraGeLaMo22}]\label{def:decpomdp:counold}
	A \emph{counting DecPOMDP} $\bar{M}_c$ is a tuple $(\bar{\boldsymbol{I}},\allowbreak \bar{S},\allowbreak  \bar{\boldsymbol{A}}_c,\allowbreak \bar{{T}},\allowbreak \bar{{R}},\allowbreak \bar{\boldsymbol{O}}_c, \allowbreak \bar{{\varOmega}}, \tau)$, with 
	\begin{itemize}
		\item $\bar{\boldsymbol{I}}$ a \emph{partitioning} $\{\mathfrak{I}_k\}_{k=1}^K$ of $N$ agents, with $n_k = |\mathfrak{I}_k|$ and $|\bar{\boldsymbol{I}}| = \sum_{k} n_k = N$,
		\item $S$ a random variable with a set of states as range,
		\item $\bar{\boldsymbol{A}}_c = \{\#_{\mathfrak{I}_k} [{{A}_k}]\}_{k=1}^K$ a set of decision CRVs, 		\item $\bar{T}(S',S,\bar{\boldsymbol{A}}_c)$ a transition function with counted actions, 
		\item $\bar{R}(S, \bar{\boldsymbol{A}}_c)$ a reward function with counted actions, 
		\item $\bar{\boldsymbol{O}}_c = \{\#_{\mathfrak{I}_k} [{{O}_k}]\}_{k=1}^K$ a set of CRVs,
        \item $\bar{\varOmega}(\bar{\boldsymbol{O}}_c, S)$ a sensor function with counted observations, and
        \item $\tau$ a horizon.
	\end{itemize}
\end{definition}
The equivalence between a DecPOMDP fulfilling \cref{eq:assumpt:1,eq:asst2:counting} and a counting Dec\-POMDP can be shown by construction, replacing all mappings with permutations of inputs with a single mapping with a histogram and vice versa (see App.~\ref{app:countingequiv}).

\subsection{(In)Tractability Results}
For a ground DecPOMDP, the model complexity lies in $O(s^2 a^N)$ for $T$, $O(s a^N)$ for $R$, and $O(s o^N)$ for $\varOmega$ with $s=|\ran(S)|$, $a = \max_{i \in \boldsymbol{I}} |\ran(A_i)|$, and $o = \max_{i \in \boldsymbol{I}} |\ran(O_i)|$, while the cost of evaluating a joint policy lies in $O(so^{N\tau})$ and the policy space size in $O(a^{\frac{N(o^\tau-1)}{o-1}})$ \cite{OliAm16}. 
Using CRVs allows for reducing the model size and evaluation cost from exponential to polynomial.
\begin{theorem}[\cite{BraGeLaMo22}, Thm.\ 2]
    A counting DecPOMDP $\bar{M}_c$ allows for representation [i.e., model] and cost to depend polynomially on $N$.
\end{theorem}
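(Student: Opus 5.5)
The plan is to bound each component of $\bar{M}_c$ by the number of histograms it ranges over, and then redo the recursion in \cref{eq:eu:gr} with histograms in place of joint actions and observations. The basic counting fact is the stars-and-bars bound. The number of histograms over $m$ values for $n_k$ agents is $\binom{n_k+m-1}{m-1}\in O(n_k^{m-1})$. This is polynomial in $n_k \le N$ once the local range sizes are fixed, that is, $m=a$ for actions and $m=o$ for observations. Hence $|\ran(\#_{\mathfrak{I}_k}[A_k])|\in O(N^{a-1})$ and $|\ran(\#_{\mathfrak{I}_k}[O_k])|\in O(N^{o-1})$.

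\textbf{Representation.} The joint counted action space is the product over the $K$ partitions, of size $O(N^{K(a-1)})$. Likewise the joint counted observation space has size $O(N^{K(o-1)})$. I would then read off the table sizes:
\begin{itemize}
\item $\bar{T}$ needs $O(s^2 N^{K(a-1)})$ entries;
\item $\bar{R}$ needs $O(s N^{K(a-1)})$ entries;
\item $\bar{\varOmega}$ needs $O(s N^{K(o-1)})$ entries.
\end{itemize}
With $K$, $s$, $a$, $o$ treated as fixed (the paper's standing assumption that $K\ll N$), all three are polynomial in $N$. This contrasts with the $a^N$ and $o^N$ bounds of the ground model, and correctness of the compact tables is the construction in App.~\ref{app:countingequiv}.

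\textbf{Cost.} Here I would adapt \cref{eq:eu:gr}. The inner sum over joint observations $\boldsymbol{o}^{t+1}$ is replaced by a sum over counted observations, one histogram per partition. The argument that makes this valid is symmetry (\cref{eq:asst2:counting}): all permutations of a joint observation have equal sensor probability, and they are represented by a single histogram. The weight attached to each histogram is the multinomial coefficient $n_k!/\prod_l h(v_l)!$, or equivalently $\bar{\varOmega}$ stores the aggregated probability.

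Unrolling the recursion over $\tau$ steps then gives $O(s\cdot N^{K(o-1)\tau})$ history-histogram sequences. For fixed horizon $\tau$ this is polynomial in $N$, replacing the ground $O(s o^{N\tau})$.

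\textbf{Main obstacle.} The hard step is justifying that the utility depends only on histograms of observations. In general each agent's next action depends on its own individual history, so after grouping observations into histograms one must still know the resulting action histogram.

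My first attempt will follow the formulation of \cite{BraGeLaMo22}, in which a policy maps counted observation histories to counted actions (a partition-level policy). Under that formulation the action histogram at each node is a function of the histogram history alone, so the recursion closes over histograms. I would state explicitly that cost here means the cost of evaluating such a counted joint policy; this is exactly what leaves room for the policy-space explosion the paper goes on to discuss.

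I would then close by checking the base case $U(s,\cdot)=\bar{R}$ and the prior weighting, each of which adds only a factor of $s$.
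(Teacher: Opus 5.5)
Your proposal is correct and takes the same route as the paper. Both bound the CRV ranges by the stars-and-bars binomial coefficient and substitute these into the ground model sizes and the $O(so^{N\tau})$ evaluation cost, with $K$ replacing $N$ in the exponent; your $N^{a-1}$ and $N^{o-1}$ are just a slightly tighter form of the paper's $N^{a}$ and $N^{o}$. Your explicit remark that the recursion closes over histograms only because policies in $\bar{M}_c$ map counted observation histories to counted actions is left implicit in the paper, and it is exactly the feature the paper later identifies as causing the policy-space explosion.
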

The theorem holds by the range sizes of the CRVs, which are capped by $N^a$, $a = \max_{k}|\ran(A_k)|$, and $N^o$, $o = \max_{k}|\ran(O_k)|$, as an upper bound on the number of histograms given by the binomial coefficient $\binom{N+v-1}{v-1} \leq N^v$, $v\in \{a,o\}$ \cite{MilZeKeHaKa08}.
Compared to the ground case, $a$ and $o$ are replaced by $N^a$ and $N^o$, while $K$ replaces $N$ in the exponent, yielding a polynomial dependence on $N$, assuming that $K \ll N$.

However, the number of policies explodes: 
Since the size of the policy space contains $o$ in the exponent, replacing $o$ with $N^o$ means that $N$ remains in the exponent, all the while $a$ is replaced with $N^a$, leading to an expression that has $N$ in the base as well as the exponent. 
\begin{corollary}
    The policy space of a counting DecPOMDP $\bar{M}_c$ depends exponentially on $N$.
\end{corollary}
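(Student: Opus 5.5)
The plan is to lower-bound the number of local policies available to a single partition in $\bar{M}_c$ and show that this bound alone is exponential in $N$. In the counting DecPOMDP of \cref{def:decpomdp:counold}, the decision variables are the CRVs $\#_{\mathfrak{I}_k}[A_k]$ and the observation variables are the CRVs $\#_{\mathfrak{I}_k}[O_k]$. A policy for partition $k$ therefore maps histories of observation histograms to action histograms. It can be drawn as a policy tree in which every node carries a histogram in $\ran(\#_{\mathfrak{I}_k}[A_k])$ and every node branches on each histogram in $\ran(\#_{\mathfrak{I}_k}[O_k])$. So the count reduces to the ground formula from \cite{OliAm16}, $O(a^{\frac{N(o^\tau-1)}{o-1}})$, with $N$ replaced by $K$ and with $a$ and $o$ replaced by the CRV range sizes. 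A lower bound suffices, so I will give a matching lower bound for a single partition.

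The steps, in order, are as follows.
\begin{enumerate*}[label=(\roman*)]
\item Fix a partition $\mathfrak{I}_k$ with $n_k$ agents. Write $\bar a = |\ran(\#_{\mathfrak{I}_k}[A_k])| = \binom{n_k+a-1}{a-1}$ and $\bar o = |\ran(\#_{\mathfrak{I}_k}[O_k])| = \binom{n_k+o-1}{o-1}$.
\item Count the nodes of a depth-$\tau$ tree with branching factor $\bar o$. This gives $\sum_{t=0}^{\tau-1}\bar o^{\,t} = \frac{\bar o^{\tau}-1}{\bar o-1} \ge \bar o^{\,\tau-1}$.
\item Since each node independently takes one of $\bar a$ labels, the number of policies for partition $k$ is $\bar a^{(\bar o^\tau-1)/(\bar o-1)}$. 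The joint policy space is the product over $k$, so it is at least this quantity.
\item Bound the binomials from below. For $a,o\ge 2$ we have $\bar a \ge n_k+1$ and $\bar o \ge n_k+1$. More sharply, $\bar o \ge (n_k/(o-1))^{o-1}$.
\end{enumerate*}
Together these give a policy space of at least $(n_k+1)^{(n_k+1)^{\tau-1}}$. For $\tau\ge 2$ this is $2^{\Omega(n_k^{\tau-1}\log n_k)}$, so it is exponential in $n_k$.

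To translate the bound into a statement in $N$, I will use the assumption $K \ll N$, taking $K$ fixed. Then some partition satisfies $n_k \ge N/K$, and substituting this gives a lower bound of $2^{\Omega((N/K)^{\tau-1}\log(N/K))}$. For the matching upper bound, I use $\bar a\le N^a$ and $\bar o\le N^o$, as in the discussion after the theorem. This gives $(N^a)^{K N^{o\tau}}$, so $N$ appears both in the base and in the exponent, which is exactly the claim.

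The main obstacle is being precise about what counts as a policy of $\bar{M}_c$, because \cref{def:decpomdp:counold} does not state this explicitly. If the branching of a partition's policy tree is over observation histograms and its labels are action histograms, then the count above is immediate. I will argue that this is the interpretation forced by the definition, since $\bar T$, $\bar R$ and $\bar\varOmega$ only accept CRV values. I will also note the degenerate cases $\tau=1$, $o=1$ and $a=1$, where the exponent does not grow and the corollary needs $\tau\ge2$ and $o,a\ge2$.
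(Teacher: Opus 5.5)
Your proposal is correct and follows the same reasoning as the paper: treat $\bar{M}_c$ as a $K$-agent DecPOMDP whose action and observation ranges are the CRV ranges, then substitute those range sizes into the policy-tree count so that $N$ appears in both the base and the exponent. The paper only substitutes the upper bounds $N^a$ and $N^o$, which by itself gives only an upper bound; your explicit lower bound $(n_k+1)^{(n_k+1)^{\tau-1}}$ with $n_k \ge N/K$, together with the caveat $\tau, a, o \ge 2$, is what actually makes the exponential dependence rigorous.
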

The crux is the choice of CRVs in the action and observation set used for building policies.
In contrast, we use plain variables and count policies instead, which is a small change with profound effects.

\section{Containing the Fire by Counting Policies}
For a polynomial dependence on the agent numbers, we keep plain variables in the action and observation sets, enabled by \cref{eq:assumpt:1}, but use CRVs in the transition, reward, and sensor function, enabled by \cref{eq:asst2:counting}.
With plain variables, there is a set of (representative) policies available in each partition, which allows for structuring the policy space by counting the number of agents following each representative policy and updating the $MEU$ calculations. 
We first formalise counted policies before renewing the definition of counting DecPOMDPs.
Last, we look at the effect of counted policies on the $MEU$ calculation and complexity.

\subsection{Counted Policies}
Let us consider the example from the beginning of \cref{sec:explode} with $12$ agents and actions $a,b$ available.
At step $t=1$, assuming two possible observations $x,y$, each of the $12$ agents has the same $8$ policies available, see \cref{fig:policies}.
Instead of tracking all possible policies for each individual agent, we can use these $8$ policies as representative policies to keep track of agent behaviour in histograms: 
For example, the partition policy in which all agents follow the first policy is encoded by $[12,0,0,0,0,0,0,0]$, while the other configuration in which one agent follows the second policy and the remaining agents follow the first one is encoded by $[11,1,0,0,0,0,0,0]$. 
In this way, all possible partition policies can be compactly encoded. 

More formally, according to \cref{eq:assumpt:1}, the set of policies for any two agents $i,j$ within a partition $\mathfrak{I}_k$ is established using the same sets of actions $A_i = A_j = A_k $ and observations $O_i = O_j = O_k$. 
As a result, their local policy space is the same, i.e., $\varPi_i= \varPi_j = \varPi_k$.
These policies are referred to as \emph{representative policies} $\varPi_k$ of partition $\mathfrak{I}_k$ and can be counted for each partition.

\begin{definition}\label{def:cntpol}
	Let the set of $J$ {representative policies} of a partition $\mathfrak{I}_k$ be given by $\varPi_k = \{\pi_1, \dots, \pi_J\}$. 
	A \emph{counted partition policy} over $\varPi_k$ is a histogram $h^\varPi_k = \{(\pi_j, m_j)\}_{j=1}^J$, where $m_j \geq 0$ is the number of agents in $\mathfrak{I}_k$ following policy $\pi_j$, and $\sum_{j=1} ^J m_j = n_k$.
	The counted policy space for partition $\mathfrak{I}_k$, denoted by $H^\varPi_k$, is given by the set of all histograms meeting $\sum_{j=1} ^J m_j = n_k$.
	The \emph{joint counted policy} space over all partitions is given by $\boldsymbol{H}^\varPi = \times_{k=1}^K H^\varPi_k$.
	The counted policy space of all partitions except $k$ is denoted by $\boldsymbol{H}^\varPi_{-k}$ and of all agents except some agent $i \in \mathfrak{I}_k$ by $\boldsymbol{H}^\varPi_{-i}$ where $\sum_{j=1} ^J m_j = n_k - 1$ for $H^\varPi_{k}$ without $i$.
\end{definition}

\subsection{A Renewed Definition} 
To distinguish the two definitions, we use \emph{policy-counted} DecPOMDP as a name here, which is still a DecPOMDP with a partitioned agent set fulfilling \cref{eq:assumpt:1,eq:asst2:counting}.

\begin{definition}\label{def:decpomdp:coun}
	A \emph{policy-counted DecPOMDP} $\bar{M}$ is a tuple  $(\bar{\boldsymbol{I}},\allowbreak \bar{S},\allowbreak  \bar{\boldsymbol{A}},\allowbreak \bar{{T}},\allowbreak \bar{{R}},\allowbreak \bar{\boldsymbol{O}},\allowbreak \bar{\boldsymbol{\varOmega}}, \tau)$, with 
	\begin{itemize}
		\item $\bar{\boldsymbol{I}}$ a \emph{partitioning} $\{\mathfrak{I}_k\}_{k=1}^K$ of $N$ agents, $n_k = |\mathfrak{I}_k|$ and $|\bar{\boldsymbol{I}}| = \sum_{k} n_k = N$,
		\item $S$ a random variable with a set of states as range,
		\item $\bar{\boldsymbol{A}} = \{A_k\}_{k=1}^K$ a set of decision random variables, 
		\item ${\bar{T}(S', S, \bar{\boldsymbol{A}}_c)}$ a transition function with counted actions, 
		\item $\bar{R}(S, \bar{\boldsymbol{A}}_c)$ a reward function with counted actions, 
		\item $\bar{\boldsymbol{O}} = \{O_k\}_{k=1}^K$ a set of random variables, 
        \item $\bar{\varOmega}(\bar{\boldsymbol{O}}_c, S)$ a sensor function with counted observations, and
        \item $\tau$ a horizon.
	\end{itemize}
    where $\bar{\boldsymbol{A}}_c = \{\#_{\mathfrak{I}_k}[A_k]\}_{k=1}^K$ and $\bar{\boldsymbol{O}}_c = \{\#_{\mathfrak{I}_k}[O_k]\}_{k=1}^K$.
\end{definition}
The difference between \cref{def:decpomdp:counold} and \cref{def:decpomdp:coun} only lies in $\bar{\boldsymbol{A}}_c$ and $\bar{\boldsymbol{O}}_c$ versus $\bar{\boldsymbol{A}}$ and $\bar{\boldsymbol{O}}$, which are CRVs in the former and plain variables in the latter case, as the set of actions and observations available. 
Since \cref{eq:asst2:counting} holds in both definitions, $\bar{T}$, $\bar{R}$, and $\bar{\varOmega}$ are the same with CRVs as inputs for actions and observations.
This encoding captures the symmetric behaviour of agents within each partition and helps to reduce the model complexity by operating over counted representations instead of enumerating all individual agents. 
\Cref{thm:equiv} establishes the equivalence between the ground DecPOMDP and its counted version of \cref{def:decpomdp:coun}. 

\begin{theorem}\label{thm:equiv}
	A ground DecPOMDP $M$ that meets \cref{eq:assumpt:1,eq:asst2:counting} is equivalent to a policy-counted DecPOMDP $\bar{M}$.
\end{theorem}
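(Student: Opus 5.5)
We prove \cref{thm:equiv} by construction in both directions, mirroring the argument for counting DecPOMDPs in App.~\ref{app:countingequiv}. Equivalence means that the two models have the same semantics: there is a correspondence between joint policies of $M$ and (counted) joint policies of $\bar{M}$ such that corresponding policies receive the same utility. In particular, the $MEU$ of \cref{meuforgroundcase} coincides.

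\textbf{From $M$ to $\bar{M}$.} Given $M$ meeting \cref{eq:assumpt:1,eq:asst2:counting}, we partition $\boldsymbol{I}$ into $\mathfrak{I}_1,\dots,\mathfrak{I}_K$. By \cref{eq:assumpt:1}, all agents of $\mathfrak{I}_k$ share $\ran(A_k)$ and $\ran(O_k)$, so we can set $\bar{\boldsymbol{A}}=\{A_k\}_k$ and $\bar{\boldsymbol{O}}=\{O_k\}_k$. For each histogram tuple $(h_1,\dots,h_K)$ of actions, we pick any ground joint action $\boldsymbol{a}$ whose per-partition histograms equal $h_k$ and define $\bar{T}(s',s,h_1,\dots,h_K)=T(s',s,\boldsymbol{a})$, and analogously for $\bar{R}$ and $\bar{\varOmega}$. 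This is well defined because any two such $\boldsymbol{a}$ differ by a product of within-partition permutations $\theta_k$. Applying \cref{eq:asst2:counting} once per partition then shows that the values agree.

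\textbf{From $\bar{M}$ to $M$.} Conversely, given $\bar{M}$, we create $n_k$ agents with ranges $\ran(A_k),\ran(O_k)$ and define $T(s',s,\boldsymbol{a})=\bar{T}(s',s,\#(\boldsymbol{a}_1),\dots,\#(\boldsymbol{a}_K))$, where $\#$ maps a tuple to its histogram. The resulting model satisfies \cref{eq:assumpt:1} by construction. It satisfies \cref{eq:asst2:counting} because histograms are invariant under permutations. The same construction applies to $R$ and $\varOmega$. Both constructions are mutually inverse up to the choice of representatives.

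\textbf{Policy spaces and utilities.} Since every agent in $\mathfrak{I}_k$ has the same local policy space $\varPi_k$, each ground joint policy $\boldsymbol{\pi}$ determines a counted joint policy in $\boldsymbol{H}^\varPi$ (\cref{def:cntpol}) by counting how many agents of each partition follow each representative policy. Every counted joint policy is realised by some ground joint policy. The remaining step, which I expect to be the main obstacle, is to show that $U_M(\boldsymbol{\pi})$ depends only on this counted image. Only then is $U_{\bar{M}}$ well defined on $\boldsymbol{H}^\varPi$ and equal to $U_M$, and only then do the $MEU$s coincide, with argmaxes corresponding.

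I would argue this by induction over the recursion in \cref{eq:eu:gr}, from step $\tau-1$ down to $0$. The claim to prove is that for any within-partition permutation $\theta$ of agents, $U_M^{\theta(\boldsymbol{\pi})}(s,\theta(\boldsymbol{o}^{0:t}))=U_M^{\boldsymbol{\pi}}(s,\boldsymbol{o}^{0:t})$.

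The joint action $\theta(\boldsymbol{\pi})(\theta(\boldsymbol{o}^{0:t}))$ is the permutation of $\boldsymbol{\pi}(\boldsymbol{o}^{0:t})$, so $R$ and $T$ agree by \cref{eq:asst2:counting}. In the sum over $\boldsymbol{o}^{t+1}$, we reindex by the bijection $\boldsymbol{o}\mapsto\theta(\boldsymbol{o})$. Then $\varOmega$ is invariant by \cref{eq:asst2:counting}, and the induction hypothesis handles the inner utilities.

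Since two ground policies with the same counts differ exactly by such a $\theta$, and the prior sum at $\bot^{0:0}$ is trivially permutation-invariant, the utility is a function of the counted joint policy. This completes the equivalence.
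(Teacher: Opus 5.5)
Your two constructions are essentially the paper's proof of this theorem. The paper likewise groups the agents into partitions with shared $A_k$, $O_k$. It replaces each partition's variable tuple in $T$, $R$, $\varOmega$ by a CRV, collapsing all permutation-equivalent mappings into one histogram mapping, and reverses this by expanding every histogram mapping over all tuples with that histogram.

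Your explicit well-definedness step is what the paper leaves implicit in ``discard all mappings with permutations''. Tuples with equal per-partition histograms differ by a product of within-partition permutations, so \cref{eq:asst2:counting} applied once per partition gives equal values. The same step shows that the choice of representative is irrelevant, so the two constructions are exactly inverse up to renaming agents.

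Where you genuinely depart from the paper is the semantic layer. The paper reads ``equivalent'' as mutual convertibility of the models and defers utilities to \cref{thm:correctnesofMEU}. There it compares the ground recursion \cref{eq:eu:gr} with the counted recursion \cref{eq:eu:cnt} term by term, merging ground joint observations with equal histograms via the multinomial coefficient. Your induction $U_M^{\theta(\boldsymbol{\pi})}(s,\theta(\boldsymbol{o}^{0:t}))=U_M^{\boldsymbol{\pi}}(s,\boldsymbol{o}^{0:t})$ is correct and makes explicit an invariance the paper uses only tacitly. It shows that $U_M$ factors through the counting map, which is what justifies $\boldsymbol{H}^\varPi$ as the policy space. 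Applied to permutations among agents that share a representative policy and the current history (these fix $\boldsymbol{\pi}$ and $\boldsymbol{o}^{0:t}$), it also yields the equality of summands behind the multinomial merging.

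What your induction does not give by itself is that the explicit counted recursion \cref{eq:eu:cnt} evaluates to this common value. In your version $U_{\bar{M}}$ is defined through an arbitrary ground preimage, so that identification remains the bookkeeping of \cref{thm:correctnesofMEU}. For the present statement, which concerns only conversion between the models, your construction part already suffices.
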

The full proof can be found in \cref{app:thm2}, which is based on the fact that one can be converted into the other by construction and vice versa.
Having established the equivalence between ground and counting Dec\-POMDPs, we now show how the counting formulation affects the model complexity.

\begin{theorem} \label{thm:complexitycounting}
    For a fixed number of partitions $K$, fixed action and observation spaces, and a fixed horizon $\tau$, the model complexity of a counting DecPOMDP $\bar{M}_c$ is polynomial in the number of agents $N$. 
\end{theorem}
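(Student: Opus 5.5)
The plan is to bound the size of each model component of $\bar{M}_c$ (and, by the identical function definitions, of $\bar{M}$ from \cref{def:decpomdp:coun}) separately and then sum. The components are the partitioning $\bar{\boldsymbol{I}}$, the state variable $S$, the sets of action and observation variables, the tables $\bar{T}$, $\bar{R}$, $\bar{\varOmega}$, and $\tau$. The partitioning, the state range and the horizon do not depend on $N$ except through storing $K$ sizes $n_k \le N$, which is trivially polynomial. Write $s=|\ran(S)|$, $a=\max_k|\ran(A_k)|$ and $o=\max_k|\ran(O_k)|$. All of these are fixed by assumption, so the only $N$-dependence enters through the ranges of the CRVs $\#_{\mathfrak{I}_k}[A_k]$ and $\#_{\mathfrak{I}_k}[O_k]$.

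First, I bound the CRV ranges. The number of histograms over $v$ values summing to $n_k$ is the stars-and-bars count $\binom{n_k+v-1}{v-1}$. I bound this by $(n_k+1)^{v-1}\le (N+1)^{v}$; using $N^v$ as the paper does is also fine for $N\ge 2$. Hence $|\ran(\#_{\mathfrak{I}_k}[A_k])|\le (N+1)^a$ and $|\ran(\#_{\mathfrak{I}_k}[O_k])|\le (N+1)^o$.

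Second, I bound the joint counted inputs by taking the product over the $K$ partitions. This gives at most $(N+1)^{aK}$ joint counted actions and $(N+1)^{oK}$ joint counted observations. The table sizes then follow:
\begin{align*}
|\bar{T}| &\le s^2 (N+1)^{aK}, & |\bar{R}| &\le s (N+1)^{aK}, & |\bar{\varOmega}| &\le s (N+1)^{oK}.
\end{align*}
Since $K$, $a$, $o$ and $s$ are constants, each bound is a polynomial in $N$ of degree at most $\max(a,o)K$. Adding the constant-size plain action and observation sets $\{A_k\}$, $\{O_k\}$ (or, for $\bar{M}_c$, just listing their CRV ranges, which are polynomial as shown above) yields the claim. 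The horizon $\tau$ plays no role in the model size itself; it is listed only because it is a fixed parameter.

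The main obstacle is being careful about which object "model complexity" refers to. The statement must not be read as covering the policy space, which by the Corollary is exponential in $N$ for $\bar{M}_c$. I would therefore state explicitly that the bound concerns the representation of $T$, $R$, $\varOmega$ and the variable ranges, in agreement with Thm.~2 of \cite{BraGeLaMo22}. I would also point out that the crucial step is the exponent being $K$ rather than $N$, because symmetry (\cref{eq:asst2:counting}) lets each table be indexed by histograms per partition instead of by individual agent tuples.
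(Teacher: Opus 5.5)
Your proposal is correct and follows the paper's proof. Like the paper, you bound each CRV range by the stars-and-bars count, take the product over the $K$ partitions, and read off table sizes $O(s^2N^{Ka})$, $O(sN^{Ka})$ and $O(sN^{Ko})$ for $\bar{T}$, $\bar{R}$ and $\bar{\varOmega}$. Your bound $\binom{n_k+v-1}{v-1}\le (n_k+1)^{v-1}\le (N+1)^v$ is slightly more careful than the paper's $\binom{n_k+m-1}{m-1}\le n_k^m$, which fails for $n_k=1$, though this does not affect the polynomial conclusion.
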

The full proof can be found in \cref{app:thm3}, where we show that using histograms within agent partitions keeps the model size polynomial in $N$.

\subsection{Counted Utility Calculation}
As the policy space is given by joint counted policies, we can update the semantics and $MEU$ calculation.
The semantics of the policy-counted DecPOMDP $\bar{M}$ is given by the joint counted policy space $\boldsymbol{H}^\varPi$.
The policy-counted DecPOMDP problem asks for the joint counted policy that yields the maximum expected utility (with horizon $\tau$), which is again computed by recursively calculating the expected utility over the next states and observations, until reaching $\tau$.

However, the sum over observations can now be turned into a sum over counted observations:
Given a joint counted policy, each partition is partitioned again into groups following a representative policy according to the counted partition policy, with group sizes corresponding to the histogram counts.
For a given (representative) observation history, the recursive call to the next utility as well as the remaining sub-policy are the same for these agents, making them indistinguishable among themselves.
As such, for each of these groups, we can count observations in histograms over the (sub)group with a particular representative policy and observation history, and let the sum go over the cross product of these histograms. 
Conditioning these groups on their shared observation history means that groups split up further, which can in the worst case lead to singletons if the group size is small to begin with or $\tau$ is very large but the overall complexity is still going to be only polynomial as we show afterwards. 
Additionally, we only need this deep recursion for the utility definition here and are able to define policy-counted dynamic programming using a single step. 

To formalise this setting, we define the space of representative observation histories as well as the space of observation histograms given a representative observation history. 
\begin{definition}\label{def:reprobshist}
	Given a step $t$, a \emph{representative observation history} $p_k^{0:t,o}$ of a partition $\mathfrak{I}_k$ is a sequence of observations $o^{t'} \in \ran(O_k)$ of length $t$, $t' \in \{1, \dots, t\}$, with $o^0 = \bot$ as an empty observation.
	Given a representative observation history $p_k^{0:t,o}$, let $p_k^{t',o}$ refer to the entry at point $t'$ in the sequence and $p_k^{0:t-1,o}$ refer to the subsequence without the last entry $p_k^{t,o}$.
	Finally, let $P^{0:t,o}_k$ denote the \emph{space of representative observation histories} of length $t$ for the agents in partition $\mathfrak{I}_k$, i.e., $P^{0:t,o}_k = \bot \times \times_{t'=1}^t \ran(O_k)$.
\end{definition}

\begin{definition}\label{def:obshist}
	Given a step $t$, a counted partition policy $h_k^\varPi$, and a representative policy $\pi \in \varPi_k$ for a partition $\mathfrak{I}_k$, a \emph{counted partition observation} is a set of histograms $\{h_{k,p^{0:t,o}_k}^{o,\pi}\}_{p^{0:t,o}_k \in P^{0:t,o}_k}$ where 
	\begin{align}\label{eq:obshist}
		h_{k,p^{0:t,o}_k}^{o,\pi} &= \{(o_l, n_l)\}_{o_l \in \ran(O_k)}\nonumber\\
		\intertext{with}
		\sum_{o_l \in \ran(O_k)} n_l &= 
		\begin{cases*}
			h_{k,p^{0:t-1,o}_k}^{o,\pi}(p_k^t) & if $t > 0$\\
			h_k^\varPi(\pi)              & if $t=0$
		\end{cases*}
	\end{align}
	and $p^{0:t,o}_k = p^{0:t-1,o}_k \circ p_k^t$.
	Let $H_{k}^{o,\pi}$ denote the \emph{space of counted partition observations} for a given representative policy $\pi$, i.e., the set of all possible sets of histograms that fulfil \cref{eq:obshist}.
	Given a joint counted policy $\boldsymbol{h}^\varPi$, let $H_{k}^{o,\varPi}$ denote the space of counted partition observations over all representative policies $\varPi_k$, i.e., $H_{k}^{o,\varPi} = \times_{\pi \in \varPi_k} H_{k}^{o,\pi}$, and $\boldsymbol{H}^{o,\varPi}$ the \emph{space of joint counted observations}, i.e.,  $\boldsymbol{H}^{o,\varPi} = \times_{k=1}^K H_k^{o,\varPi}$.
\end{definition}
Note that the observation histogram in \cref{eq:obshist} holds counts $n_l$ for $p^{0:t,o} \circ o_l$.
Based on these definitions, we can now rewrite \cref{meuforgroundcase} as follows:
\begin{align}
	MEU(\bar{M}) = (\operatorname*{arg\;max}_{\boldsymbol{h}^\varPi \in \boldsymbol{H}^\varPi}  U_{\bar{M}}(\boldsymbol{h}^\varPi) , \max_{\boldsymbol{h}^\varPi \in \boldsymbol{H}^\varPi}  U_{\bar{M}}(\boldsymbol{h}^\varPi))  \label{meuforcountingcase}
\end{align}
where $U_{\bar{M}}(\boldsymbol{h}^\varPi)$ is calculated recursively over $\tau$ steps and weighted according to the prior $T(s^0, .\ , .\ )$, i.e., $U_{\bar{M}}(\boldsymbol{h}^\varPi) = \sum_{s^0 \in \ran(S)} T(s_0, .\ , .\  ) U_{\bar{M}}^{\boldsymbol{h}^\varPi} (s^0, \bot^{0:0})$ with
\begin{align}\label{eq:eu:cnt}
    \begin{split}
        &U_{\bar{M}}^{\boldsymbol{h}^\varPi}(s^t, (\boldsymbol{h}^{o,\varPi})^{0:t}) = R(s^t, \boldsymbol{h}^a) + \sum_{s^{t+1} \in \ran(S)}T(s^{t+1}, s^t, \boldsymbol{h}^a) \\
        &\ \ \ \sum_{\boldsymbol{h}^{o,\varPi} \in \boldsymbol{H}^{o,\varPi}} Mul(\boldsymbol{h}^{o,\varPi}) 
         \cdot\varOmega(\boldsymbol{h}^{o},s^{t+1}) U_{\bar{M}}^{\boldsymbol{h}^\varPi} (s^{t+1}, (\boldsymbol{h}^{o,\varPi})^{0:t} \circ \boldsymbol{h}^{o,\varPi})
    \end{split}
\end{align}
with $Mul(\boldsymbol{h}^{o,\varPi})$ the multinomial coefficient, denoting the number of ground permutations encoded by the histograms \cite{TagFiDaBl13}, where for each partition $\mathfrak{I}_k$, the counted partition action $h^a_k \in \boldsymbol{h}^a$ as input to $R$ and $T$ is given by adding up, for each action $a \in \ran(A_k)$, how often $a$ is carried out over all representative policies and all representative observation histories using the counts in the latest counted partition observation if the observation history plus the newest observation map to $a$ in the current representative policy, i.e.,
\begin{align}
    \begin{split}\label{eq:cnt:ac}
    h^a_k &= \{(a_l, n_l)\}_{a_l \in \ran(A_k)} \quad \text{with}\\
    n_l &= \begin{cases*}
        \displaystyle\sum_{\substack{\pi \in\\ \varPi_k}} \sum_{\substack{p \in \\ P^{0:t-1,o}_k}} \smashoperator[r]{\sum_{\substack{o \in\\ \ran(O_k)}}} h^{o,\varPi,t}_{k,p}(o) \mathds{1}_{\pi_k(p \circ o) = a_l} & if $t > 0$ \\
        \displaystyle\sum_{\pi \in \varPi_k} h_k^\varPi(\pi) \mid \pi(\bot) = a_l & if $t = 0$
        \end{cases*}
    \end{split}
\end{align}
with $h^{o,\varPi,t}_{k,p} \in \boldsymbol{h}_k^{o,\varPi,t} \in (\boldsymbol{h}_k^{o,\varPi})^{o:t}$ and $\mathds{1}$ being the indicator function returning $1$ if $\pi_k(p \circ o) = a_l$.
The counted observation $h_k^{o} \in \ran(\#_{\mathfrak{I}_k}[O_k])$ as input to $\varOmega$ is given by adding up, for each observation $o \in \ran(O_k)$, how often $o$ is observed over all representative policies and all representative observation histories in the latest counted partition observation, i.e., \begin{align}
    \begin{split}\label{eq:cnt:obs}
    h^o_k = \{(o_l, n_l)\}_{o_l \in \ran(O_k)} \quad \text{with} \quad
    n_l = \sum_{\pi \in \varPi_k} \sum_{p \in P^{0:t-1,o}_k} h^{o,\varPi,t}_{k,p}(o_l)
    \end{split}
\end{align}

The utility calculation for a ground DecPOMDP and an equivalent policy-counted DecPOMDP is equivalent again, leading to the same maximum expected utility.
The trick is that calculations that lead to the same result are only performed once in the counted case, which is possible due to the histograms essentially encoding when calculations lead to the same result.

\begin{theorem}\label{thm:correctnesofMEU}
    The maximum expected utility for a ground DecPOMDP $M$ that meets \cref{eq:assumpt:1,eq:asst2:counting} is equal to the maximum expected utility of the corresponding policy-counted DecPOMDP $\bar{M}$.
    That is, $MEU(M) = MEU(\bar{M})$.
\end{theorem}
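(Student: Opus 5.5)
We prove $MEU(M) = MEU(\bar{M})$ in two steps. First, we define a surjective map from ground joint policies to joint counted policies that preserves utility. Second, we conclude that both maxima range over the same set of values.

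\emph{The map.} By \cref{eq:assumpt:1}, every agent $i \in \mathfrak{I}_k$ has the same local policy space $\varPi_k$. We define $\phi \colon \varPi_M \to \boldsymbol{H}^\varPi$ by sending $\boldsymbol{\pi}$ to the tuple of histograms $h^\varPi_k(\pi) = |\{ i \in \mathfrak{I}_k : \pi_i = \pi\}|$. This map is surjective: any histogram can be realised by assigning policies to the agents of each partition. The key claim is
\begin{align*}
U_M(\boldsymbol{\pi}) = U_{\bar{M}}(\phi(\boldsymbol{\pi})) \quad \text{for all } \boldsymbol{\pi} \in \varPi_M .
\end{align*}
Given the claim, $\{U_M(\boldsymbol{\pi}) : \boldsymbol{\pi}\in\varPi_M\} = \{U_{\bar{M}}(\boldsymbol{h}^\varPi) : \boldsymbol{h}^\varPi \in \boldsymbol{H}^\varPi\}$. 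Hence the maxima in \cref{meuforgroundcase,meuforcountingcase} coincide. The maximisers correspond as well: the ground argmax consists of all preimages under $\phi$ of the counted argmax, which is the sense in which the first components are equal.

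\emph{Proving the claim.} We argue by induction on the remaining horizon, from $t=\tau-1$ backwards. We need a statement general enough to carry through the recursion. Fix $\boldsymbol{\pi}$ and a ground history $\boldsymbol{o}^{0:t}$. Group the agents of each $\mathfrak{I}_k$ by their pair (representative policy, representative observation history in $P^{0:t,o}_k$). The group sizes are exactly a sequence of counted partition observations $(\boldsymbol{h}^{o,\varPi})^{0:t}$ in the sense of \cref{def:obshist}; call this $\psi(\boldsymbol{o}^{0:t})$. The inductive hypothesis is
\begin{align*}
U_M^{\boldsymbol{\pi}}(s^t,\boldsymbol{o}^{0:t}) = U_{\bar{M}}^{\phi(\boldsymbol{\pi})}(s^t,\psi(\boldsymbol{o}^{0:t})).
\end{align*}
There are three ingredients.
\begin{enumerate*}[label=(\roman*)]
\item The histogram of the ground joint action $\boldsymbol{\pi}(\boldsymbol{o}^{0:t})$ within each partition equals $h^a_k$ from \cref{eq:cnt:ac}. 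Every agent in a group plays the same action, namely $\pi(p\circ o)$. Hence, by \cref{eq:asst2:counting} and the construction behind \cref{thm:equiv}, $R$ and $T$ agree with $\bar R$ and $\bar T$.
\item The histogram of the ground joint observation $\boldsymbol{o}^{t+1}$ in each partition equals $h^o_k$ from \cref{eq:cnt:obs}, so $\varOmega(\boldsymbol{o}^{t+1},s^{t+1}) = \bar\varOmega(\boldsymbol{h}^o,s^{t+1})$.
\item By the inductive hypothesis, the continuation utility depends on $\boldsymbol{o}^{t+1}$ only through $\psi(\boldsymbol{o}^{0:t+1})$.
\end{enumerate*}
Given these, we group the ground sum over $\boldsymbol{o}^{t+1} \in \ran(\boldsymbol{O})$ into fibres of the map $\boldsymbol{o}^{t+1} \mapsto \boldsymbol{h}^{o,\varPi}$. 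On each fibre the summand is constant. The base case $t=\tau-1$ is ingredient (i) alone.

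\emph{Main obstacle.} The delicate step is the counting in (iii): each fibre must have cardinality $Mul(\boldsymbol{h}^{o,\varPi})$, and every $\boldsymbol{h}^{o,\varPi}$ satisfying \cref{eq:obshist} must be hit. I would handle this by factoring over groups. Within a group of size $g$ (fixed policy and history), assigning next observations to its distinguishable agents with prescribed counts $(n_l)$ can be done in exactly $\binom{g}{n_1,\dots,n_m}$ ways. Groups are disjoint, and their assignments are independent, so the fibre size is the product of these coefficients over all groups, which is how we read $Mul$. Surjectivity holds because any counts summing to $g$ are realisable. A related subtlety is that $\psi$ must be consistent with the recursive structure of \cref{def:obshist}: the group sizes at step $t$ are the histogram entries from step $t-1$. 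This follows directly from how the groups refine when a new observation is appended. Finally, we average over the prior $T(s^0,\cdot,\cdot)$ at $t=0$, where $\psi(\bot^{0:0})$ is simply $\phi(\boldsymbol{\pi})$. This yields the claim and thus the theorem.
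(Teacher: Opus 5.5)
Your proposal is correct and follows essentially the same route as the paper's proof. Both map each ground joint policy to its policy histogram and argue that the reward and transition terms agree because the ground joint action histogram equals the counted action. Both then collapse the sum over ground joint observations into counted observations weighted by the multinomial coefficient, with equal sensor values, and carry this through the recursion. Your version is simply more explicit than the paper's sketch: it states the inductive invariant via $\psi$, gives the fibre-size argument for $Mul$ as a product over policy--history groups, and uses the surjectivity of $\phi$ to conclude that both maxima range over the same set of values.
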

The full proof can be found in \cref{app:thm4}, where we show that both models give the same utility by mapping ground policies to counted policies.
Next, we look at evaluation cost and policy space size.

\begin{theorem}\label{thm:compl:pol}
	For a fixed number of partitions $K$, fixed representative action and observation spaces, and a fixed horizon $\tau$, the cost of evaluating a joint counted policy and the size of the overall joint counted policy space in a policy-counted DecPOMDP depend polynomially on the number of agents $N$.
\end{theorem}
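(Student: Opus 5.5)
The proof splits into two independent counting arguments: one for the size of the joint counted policy space $\boldsymbol{H}^\varPi$, and one for the number of terms in the recursion of \cref{eq:eu:cnt}. Both rest on the histogram bound already used for \cite{BraGeLaMo22}, Thm.~2: the number of histograms over $v$ values summing to $n$ is $\binom{n+v-1}{v-1} \leq (n+1)^{v-1}$, which is polynomial in $n$ whenever $v$ is fixed.

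\textbf{Policy space.} Since actions and observations are plain variables in $\bar{M}$, the set of representative policies $\varPi_k$ of depth $\tau$ has size $J_k \leq a^{\frac{o^\tau-1}{o-1}}$. This quantity is independent of $N$ and is a constant once $a$, $o$ and $\tau$ are fixed. By \cref{def:cntpol}, $|H^\varPi_k|$ is the number of histograms over $J_k$ values summing to $n_k$, so $|H^\varPi_k| \leq (n_k+1)^{J_k-1} \leq (N+1)^{J-1}$ with $J = \max_k J_k$. Taking the product over the $K$ partitions gives $|\boldsymbol{H}^\varPi| \leq (N+1)^{K(J-1)}$. This is polynomial in $N$ because $K$ and $J$ are constants.

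\textbf{Evaluation cost.} I would bound the cost as (number of recursion nodes) $\times$ (work per node).
\begin{enumerate*}[label=(\roman*)]
\item \emph{Branching at step $t$.} For a fixed representative policy $\pi$ in partition $k$, a counted partition observation at step $t$ (\cref{def:obshist}) contains one histogram over $o$ values for each $p \in P_k^{0:t-1,o}$, and there are $o^{t}$ such histories. Each histogram sums to at most $n_k$, so there are at most $(N+1)^{o-1}$ choices for each. Hence $|\boldsymbol{H}^{o,\varPi}| \leq (N+1)^{K J o^{\tau}(o-1)}$ at any step. The exponent is a constant, so this is polynomial in $N$.
\item \emph{Depth.} The recursion has depth $\tau$ and branches over $s$ next states and at most $|\boldsymbol{H}^{o,\varPi}|$ counted observations per step. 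The number of nodes is therefore at most $\bigl(s\,(N+1)^{KJo^\tau(o-1)}\bigr)^{\tau}$, still polynomial since $\tau$ is fixed.
\item \emph{Work per node.} Each node computes $h^a_k$ via \cref{eq:cnt:ac} and $h^o_k$ via \cref{eq:cnt:obs}. These are sums over $J\cdot o^{t}\cdot o$ constant-many entries with counts at most $N$. It also computes the multinomial coefficient $Mul$, which takes polynomially many arithmetic operations in $N$, and performs lookups in $\bar{T}$, $\bar{R}$ and $\bar{\varOmega}$. Those tables have polynomial size by \cref{thm:complexitycounting}, so the lookups cost polynomial time.
\end{enumerate*}
Multiplying the node count by the per-node work gives polynomial evaluation cost.

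\textbf{Main obstacle.} The delicate step is (i). The nested structure of counted partition observations must not reintroduce $N$ into an exponent. The key observation is that the number of histograms grows with the number of representative histories $o^t$, which depends only on $\tau$ and $o$, and not with the number of agents. Group splitting down to singletons, as noted before \cref{def:reprobshist}, only makes some histograms trivial and never creates new index positions. I would state this explicitly, arguing by induction on $t$ that the index set of $\boldsymbol{h}^{o,\varPi}$ is $\bigcup_k \varPi_k \times P_k^{0:t-1,o}$, a set of constant size. This contrasts with the ground cost $O(so^{N\tau})$, where $N$ does appear in the exponent.
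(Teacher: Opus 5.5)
Your proposal is correct and follows essentially the same route as the paper's proof. Both treat the number of representative policies $a^{\frac{o^\tau-1}{o-1}}$ and of representative observation histories $o^\tau$ as $N$-independent constants, then apply the histogram bound to show that $\boldsymbol{H}^{o,\varPi}$ and $\boldsymbol{H}^\varPi$ have size polynomial in $N$. Your version is a bit more careful than the paper's, which only bounds the size of the joint counted observation space and does not separately account for the recursion depth $\tau$ or the per-node work (multinomial coefficients, lookups in $\bar{T}$, $\bar{R}$, $\bar{\varOmega}$).
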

The full proof can be found in \cref{app:thm5}, which relies on the joint counted observation space and the joint counted policy space depending polynomially on the number of agents $N$, as $n_k \leq N$ agents are distributed onto the different representative observations and policies, which can be characterised using the binomial coefficient.
Thus, we have reduced the complexity from an exponential one to a polynomial one for $N$.
As such, a policy-counted DecPOMDP is tractable in the number of agents.
\begin{corollary}
	Given a fixed number of partitions $K$, fixed representative action and observation spaces, and a fixed horizon $\tau$, a policy-counted DecPOMDP is tractable in the number of agents $N$. 
\end{corollary}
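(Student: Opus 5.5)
The corollary follows by combining the earlier complexity results, so the plan is to collect the three quantities that govern tractability and show that each is polynomial in $N$ when $K$, $a=\max_k|\ran(A_k)|$, $o=\max_k|\ran(O_k)|$ and $\tau$ are fixed. These quantities are the model size, the cost of evaluating one joint counted policy, and the number of joint counted policies to be searched in \cref{meuforcountingcase}. Here \emph{tractable in $N$} means that computing $MEU(\bar{M})$ takes time polynomial in $N$. By \cref{thm:correctnesofMEU}, this value coincides with $MEU(M)$, so it also solves the ground problem.

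\textbf{Step 1: model size.} I would invoke \cref{thm:complexitycounting}. Its argument carries over verbatim to $\bar{M}$, because $\bar{T}$, $\bar{R}$ and $\bar{\varOmega}$ are the same functions over CRVs in \cref{def:decpomdp:counold} and \cref{def:decpomdp:coun}. The representation therefore has size $O(s^2 N^{Ka})$ for $\bar{T}$, and similarly for $\bar{R}$ and $\bar{\varOmega}$.

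\textbf{Step 2: policy space and evaluation cost.} I would invoke \cref{thm:compl:pol}, which gives the following.
\begin{itemize}
\item $|\boldsymbol{H}^\varPi|$ is polynomial in $N$. Concretely, $|\varPi_k| = J \le a^{(o^\tau-1)/(o-1)}$ is a constant independent of $N$, so $|H^\varPi_k| = \binom{n_k+J-1}{J-1} \le N^{J}$ and $|\boldsymbol{H}^\varPi| \le N^{KJ}$.
\item Evaluating $U_{\bar{M}}(\boldsymbol{h}^\varPi)$ via \cref{eq:eu:cnt} costs polynomial time. The recursion tree has depth $\tau$, and each level branches over $s$ states times $|\boldsymbol{H}^{o,\varPi}|$. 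The latter is polynomial because it is a product of a constant number (at most $K\cdot J\cdot o^{\tau}$) of histogram spaces, each of size at most $N^{o}$.
\item The per-node work is polynomial. It consists of forming $\boldsymbol{h}^a$ and $\boldsymbol{h}^o$ by \cref{eq:cnt:ac,eq:cnt:obs} (constant-size sums over $\varPi_k$ and $P^{0:t-1,o}_k$), computing $Mul(\cdot)$ with $O(N)$ arithmetic operations, and performing table lookups.
\end{itemize}

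\textbf{Step 3: combine.} Brute-force enumeration of $\boldsymbol{H}^\varPi$, with one evaluation per element, then runs in time equal to a polynomial times a polynomial. This is polynomial in $N$, which yields the corollary.

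\textbf{Main obstacle.} The delicate point is that the exponents are constants only because $\tau$, $a$ and $o$ are fixed. $J$ is doubly exponential in $\tau$, and the recursion depth of \cref{eq:eu:cnt} compounds the branching factor $\tau$ times. I would therefore state the final bound explicitly, roughly $N^{KJ}\cdot(s\,N^{KJo^{\tau}o})^{\tau}$, and stress that it is polynomial in $N$ but not in $\tau$. A second point to check is that the multinomial coefficients stay polynomial in bit-length, which holds since they are bounded by $o^{N}$, i.e.\ $O(N)$ bits.
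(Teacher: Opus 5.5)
Your proposal is correct and follows the paper's route. The corollary is just the combination of the polynomial model size (\cref{thm:complexitycounting}, which transfers to $\bar{M}$ because $\bar{T},\bar{R},\bar{\varOmega}$ coincide in both definitions) with \cref{thm:compl:pol}'s polynomial bounds on $|\boldsymbol{H}^\varPi|$ and on the cost of evaluating one joint counted policy, so exhaustive search over $\boldsymbol{H}^\varPi$ runs in time polynomial in $N$. Your explicit accounting for the $\tau$-fold compounding of the recursion in \cref{eq:eu:cnt} and for the bit-length of $Mul(\cdot)$ is more careful than the paper's one-line argument; one small fix is that the bound $\binom{n_k+J-1}{J-1}\le N^{J}$, inherited from the paper, fails for $N=1$, whereas $(N+1)^{J-1}$ always holds and changes nothing.
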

With the number of agents $N$ no longer appearing in the exponent, we are able to solve larger problem instances in terms of $N$ compared to the ground case, especially if the number of partitions $K$ is small.
Next, we use the counting encoding to speed up dynamic programming for DecPOMDPs.

\section{Policy-Counted Dynamic Programming}\label{sec:cntdp}
In this section, we extend the standard dynamic programming operator for policy-counted DecPOMDPs.
Because of the equivalence between the ground and policy-counted versions of DecPOMDPs, we can show that the value computations are equivalent, allowing for computing values at a partition level as well as pruning representative policies. 

The policy-counted dynamic programming operator still follows the three general steps of exhaustive backup, value computation, and pruning but on a partition-level.
That is, \cref{alg:dp} still applies but computes value vectors and performs pruning according to counted versions of those equations for each partition (see \cref{eq:countingf,eq:prunecounting}). 
During the exhaustive backup, the operator builds all \emph{representative} policies of depth $t$ based on the policies of depth $t-1$ for each partition and then computes their corresponding value vectors.
Since the operator builds increasingly deeper policies, the value computation stays at the top level, building its counted joint action from the root actions in the representative policies, i.e., with $t=0$ in \cref{eq:cnt:ac} to build $\boldsymbol{h}^a$, and uses empty representative histories, i.e., \cref{eq:obshist} with $t=0$ as well to build $\boldsymbol{H}^{o,\varPi}$.
Thus, for a state $s$ and joint counted policy $\boldsymbol{h}_{-i}^\varPi$ of the other agents and partitions, the value computation for a given representative policy $\pi_k$ is defined as follows
with $\pi_k \oplus \boldsymbol{h}_{-i}^\varPi$ adding $1$ to the corresponding count in $h_k^\varPi$ of $\boldsymbol{h}_{-i}^\varPi$ to form a full joint counted policy:
\begin{align}
	\begin{split} \label{eq:countingf}
		\bar{V}_k (s, \boldsymbol{h}^\varPi) = R(s,\boldsymbol{h}^a) +& \sum_{s'\in \ran(S)} T(s',s,\boldsymbol{h}^a) \\
		& \ \ \ \sum_{\boldsymbol{h}^{o,\varPi} \in \boldsymbol{H}^{o,\varPi}} Mul(\boldsymbol{h}^{o,\varPi}) \cdot \varOmega(\boldsymbol{h}^o, s') \bar{V}_k (s', \boldsymbol{h}^\varPi.\boldsymbol{h}^{o,\varPi}) 
	\end{split}
\end{align}
where $\boldsymbol{h}.\boldsymbol{h}^{o,\varPi}$ denotes the policy histogram that follows from the observation histogram $\boldsymbol{h}^{o,\varPi}$.
Specifically, given depth $t$ of the representative policies $\Pi_k^{t}$ used for $\boldsymbol{h}^\varPi$, the joint counted policy $\boldsymbol{h}^\varPi$, and the current joint counted observation $\boldsymbol{h}^{o,\varPi}$, the new policy histogram is calculated by adding up how often a subpolicy $\pi_l \in \Pi_k^{t-1}$ remains over the different representative policies $\pi \in \Pi_k^t$ when following the different observations $o \in \ran(O_k)$ given the counts in $h_k^{o,\varPi} \in \boldsymbol{h}^{o,\varPi}$ for $o$, i.e.,
\begin{align*}
    \begin{split}
        \boldsymbol{h}.\boldsymbol{h}^{o,\varPi} = \{(\pi_l, n_l)\}_{\pi_l \in \Pi_k^{t-1}} \quad \text{with} \quad
        n_l = \sum_{\pi \in \Pi_k^{t}} \sum_{o \in \ran(O_k)} {h}^{o,\varPi}_k(o) \mid \pi.o = \pi_l
    \end{split}
\end{align*}
During pruning, the operator eliminates weakly dominated representative policies $\pi_k$ based on the following equation (solvable by a linear programme, see \cref{app:lp:cnt}):
\begin{align}
	\begin{split}
		forall s\in \ran(S), \boldsymbol{h}_{-i}^{t} \in \boldsymbol{H}_{-i}^t \exists {\pi}_{l \neq k}^{t} \in \varPi_k^{t} : 
		\bar{V}_k(s, {\pi}_{k}^{t} \oplus \boldsymbol{h}_{-i}^{t}) \leq \bar{V}_k(s,{\pi}_{l}^{t} \oplus \boldsymbol{h}_{-i}^{t}) \label{eq:prunecounting}
	\end{split}
\end{align}
\Cref{eq:prunecounting} ensures that, for each state and each counted policy of the other partitions, a representative policy is only pruned if there exists another one with equal or higher value.

Next, we show that counted dynamic programming using \cref{eq:countingf,eq:prunecounting} is equivalent to dynamic programming using \cref{eq:valuef,eq:prune}, which is a culmination of the formalisations and theorems so far.
\begin{theorem} \label{thm:theo6}
	Using policy-counted dynamic programming on a policy-counted DecPOMDP $\bar{M}$ is equivalent to using standard dynamic programming on a ground DecPOMDP $M$, in which \cref{eq:assumpt:1,eq:asst2:counting} hold.
\end{theorem}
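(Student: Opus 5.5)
I will prove \cref{thm:theo6} step by step, following the three phases of the operator in \cref{alg:dp}: exhaustive backup, value computation, and pruning. I then close with an induction over the depth $t$. The invariant I aim for is the following. At every depth $t$, the set of representative policies $\varPi_k^t$ kept by policy-counted dynamic programming for partition $\mathfrak{I}_k$ coincides with the set $\varPi_i^t$ kept by ground dynamic programming for every agent $i \in \mathfrak{I}_k$. In addition, the counted value vectors and the ground value vectors agree under the map that sends a ground joint policy to its histogram.

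\emph{Base case and backup.} At $t=0$ (or the first backup from the empty policy), \cref{eq:assumpt:1} gives $\varPi_i = \varPi_j = \varPi_k$ for all $i,j \in \mathfrak{I}_k$. The exhaustive backup depends only on $\ran(A_k)$, $\ran(O_k)$ and $\varPi_k^{t-1}$. So if the invariant holds at $t-1$, the backed-up sets coincide at $t$ as well: the ground backup for agent $i$ and the representative backup for $\mathfrak{I}_k$ perform literally the same construction.

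\emph{Value equivalence.} Fix a state $s$, a ground policy $\pi_{i,j}^t$ of an agent $i \in \mathfrak{I}_k$, and ground policies $\boldsymbol{\pi}_{-i}$ of the other agents. Let $\boldsymbol{h}_{-i}^\varPi$ be the histogram of $\boldsymbol{\pi}_{-i}$ per \cref{def:cntpol}. I claim $V_i(s, \pi_{i,j}^t \circ \boldsymbol{\pi}_{-i}) = \bar{V}_k(s, \pi_{i,j}^t \oplus \boldsymbol{h}_{-i}^\varPi)$, and I prove it by induction on $t$.

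First, the root joint action $\boldsymbol{\pi}(\bot)$ is mapped to $\boldsymbol{h}^a$ by \cref{eq:cnt:ac} with $t=0$. By \cref{eq:asst2:counting}, $R$ and $T$ depend only on this histogram, so $R(s,\boldsymbol{\pi}(\bot))=R(s,\boldsymbol{h}^a)$ and likewise for $T$.

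Next, I regroup the sum over ground joint observations $\boldsymbol{o}$ in \cref{eq:valuef} into equivalence classes. Two ground joint observations are in the same class if, for every partition and every representative policy, they induce the same observation histogram over the agents following that policy. This is exactly the histogram $\boldsymbol{h}^{o,\varPi}$ with $t=0$ in \cref{eq:obshist}, and each class contains $Mul(\boldsymbol{h}^{o,\varPi})$ ground observations. Within a class, three things hold. The sensor value $\varOmega(\boldsymbol{o},s')$ is constant by \cref{eq:asst2:counting} and equals $\varOmega(\boldsymbol{h}^o,s')$ via \cref{eq:cnt:obs}. The continuation policies $\boldsymbol{\pi}.\boldsymbol{o}$ all have the same histogram, namely $\boldsymbol{h}.\boldsymbol{h}^{o,\varPi}$. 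By the induction hypothesis, $V_i$ depends on the continuation only through that histogram. Collapsing each class therefore turns \cref{eq:valuef} into \cref{eq:countingf}. This is essentially the one-step version of the argument behind \cref{thm:correctnesofMEU}, which I will invoke instead of redoing the bookkeeping.

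The main obstacle is in this regrouping step. Agent $i$'s own subpolicy must be tracked correctly inside the histogram, so that the continuation value is again of the form ``$\pi' \oplus \boldsymbol{h}'_{-i}$''. Concretely, I must check that the term $\bar{V}_k(s', \boldsymbol{h}^\varPi.\boldsymbol{h}^{o,\varPi})$ is well defined as a function of the full histogram, since the value in a DecPOMDP is the shared joint value. Because the reward is common, $V_i$ is in fact the joint value, so it is symmetric in all agents of a partition, including $i$. Hence it suffices to show that the joint value is a function of the joint histogram alone, which follows inductively from \cref{eq:asst2:counting}. This removes the need to distinguish agent $i$ within the histogram.

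\emph{Pruning.} The map $\boldsymbol{\pi}_{-i} \mapsto \boldsymbol{h}_{-i}^\varPi$ from $\boldsymbol{\varPi}_{-i}^t$ onto $\boldsymbol{H}_{-i}^t$ is surjective. By the value equivalence, the condition \cref{eq:prune} for $\pi_{i,j}^t$ holds if and only if \cref{eq:prunecounting} holds for the same policy as a representative of $\mathfrak{I}_k$. Moreover, by the symmetry of \cref{eq:assumpt:1}, the condition is identical for every $i \in \mathfrak{I}_k$. Ground pruning therefore removes a policy from all agents of $\mathfrak{I}_k$ exactly when counted pruning removes the representative. Iterating the while-loop in the same order preserves the invariant. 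The equivalent LP formulations of \cref{app:lp:gr,app:lp:cnt} also match, with probability mass over ground profiles aggregated onto histograms. This completes the induction over $t$ and hence the proof of \cref{thm:theo6}.
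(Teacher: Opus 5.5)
Your proposal is correct and follows essentially the same route as the paper's proof, which uses the same three-step decomposition: the exhaustive backup is justified via \cref{eq:assumpt:1}, the value computation is reduced to the argument behind \cref{thm:correctnesofMEU}, and pruning rests on the condition being identical for all agents of a partition. Where the paper leaves things implicit you spell them out: the induction over depth, the regrouping of ground joint observations into classes of size $Mul(\boldsymbol{h}^{o,\varPi})$, the use of the shared joint value so that agent $i$ need not be singled out, and the surjectivity of the aggregation map from ground profiles onto $\boldsymbol{H}_{-i}^t$ in the pruning step.
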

The full proof can be found in App.~\ref{app:thm6}, where we show that each step of the dynamic programming operator works equivalently in both the ground and counted models.
Given the equivalence of the policy-counted and standard version of dynamic programming, correctness of the policy-counted version follows directly from the correctness of the standard version.
\begin{corollary}
	Policy-counted dynamic programming using \cref{eq:countingf,eq:prunecounting} is correct.
\end{corollary}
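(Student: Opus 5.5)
The corollary follows from \cref{thm:theo6} by transferring the correctness of standard dynamic programming (\cref{alg:dp} with \cref{eq:valuef,eq:prune}) to the counted operator. Here \emph{correct} means the same as in the ground setting: after each application of the operator, the surviving policy sets still contain a joint policy achieving the maximum expected utility. Equivalently, pruning with \cref{eq:prune} never removes every optimal joint policy, since only weakly dominated policies are eliminated. We take this property of the ground operator as given \cite{HanBeZu04,szer2006point}.

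The argument proceeds step by step along the operator. By \cref{thm:equiv}, the policy-counted DecPOMDP $\bar{M}$ has an equivalent ground DecPOMDP $M$ satisfying \cref{eq:assumpt:1,eq:asst2:counting}. By \cref{thm:theo6}, each step of the counted operator corresponds to the matching ground step.

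\textbf{Exhaustive backup.} The backup over representative policies $\varPi_k^{t}$ yields, for every agent $i\in\mathfrak{I}_k$, exactly the ground set $\varPi_i^t$.

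\textbf{Value computation.} The counted value $\bar{V}_k(s,\pi_k\oplus\boldsymbol{h}^\varPi_{-i})$ from \cref{eq:countingf} equals the ground value $V_i(s,\pi\circ\boldsymbol{\pi}_{-i})$ from \cref{eq:valuef}. This holds for every ground $\boldsymbol{\pi}_{-i}$ whose histogram is $\boldsymbol{h}^\varPi_{-i}$.

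\textbf{Pruning.} Consequently, \cref{eq:prunecounting} holds for $\pi_k$ exactly when \cref{eq:prune} holds for $\pi$ for every agent of $\mathfrak{I}_k$. Pruning $\pi_k$ from $\varPi_k$ therefore corresponds to pruning $\pi$ from all $\varPi_i$, $i\in\mathfrak{I}_k$. By symmetry this is a legal sequence of ground pruning steps.

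Combining these, the counted run produces sets whose ground expansion is a valid output of the ground operator. Ground correctness then guarantees that an optimal joint policy survives. Mapping that policy to its joint counted policy, \cref{thm:correctnesofMEU} gives the same utility, so $MEU(\bar{M})$ is attained by the counted survivors.

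The main obstacle is the pruning correspondence. The ground loop prunes one policy of one agent at a time, whereas the counted loop removes a representative policy for all agents of a partition at once. This must still be a valid ground pruning sequence. I would handle it by induction on the removals within the partition. After removing $\pi$ from some agents of $\mathfrak{I}_k$, the dominating policy $\pi_l$ stays in every agent's set. The domination condition for the remaining agents is taken over a smaller set of opponent policies, so it still holds. The pruning of $\pi$ for those agents therefore remains justified, which closes the induction.
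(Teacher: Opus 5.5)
Your proposal is correct and follows the paper's route: the paper also obtains the corollary in one line from \cref{thm:theo6} together with the assumed correctness of standard dynamic programming. Your induction adds one step the paper's proof of \cref{thm:theo6} only asserts. It shows that removing a representative policy for a whole partition at once is a legal sequence of single-agent ground prunes, because opponent sets only shrink and the witnessing policies stay in every agent's set.
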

Next, we show that policy-counted dynamic programming depends polynomially on the number of agents assuming a fixed and small number of representative actions and observations and partitions as well as horizon.

\begin{theorem} \label{thm:theo7}
	For a fixed number of partitions $K$, fixed representative action and observation spaces, and a fixed horizon $\tau$, the runtime of policy-counted dynamic programming in a policy-counted DecPOMDP $\bar{M}$ depends polynomially on the number of agents $N$.
\end{theorem}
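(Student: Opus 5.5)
The plan is to bound the cost of each of the three steps of the policy-counted operator (exhaustive backup, value computation via \cref{eq:countingf}, pruning via \cref{eq:prunecounting}) for a single depth $t$, show each is polynomial in $N$, and then multiply by the fixed number $\tau$ of operator applications. Two facts are used throughout. First, all quantities that do not involve agent counts are constants under the hypotheses: the number of representative policies $|\varPi_k^t| \le a^{(o^{t+1}-1)/(o-1)}$, with $a=\max_k|\ran(A_k)|$ and $o=\max_k|\ran(O_k)|$, depends only on $a$, $o$ and $t\le\tau$. Second, every histogram space has size bounded by the binomial coefficient $\binom{n+v-1}{v-1}\le (n+1)^{v-1}$, where $v$ is a constant. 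In particular $|\boldsymbol{H}^{\varPi,t}_{-i}| \le \prod_k (n_k+1)^{|\varPi_k^t|-1} \le (N+1)^{K(J-1)}$ with $J=\max_k|\varPi_k^t|$, which is polynomial in $N$, as in \cref{thm:compl:pol}.

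\textbf{Backup.} The exhaustive backup is performed per partition on representative policies, independently of $N$. It builds $|\ran(A_k)|\cdot|\varPi_k^{t-1}|^{o}$ trees per partition, so the cost is $O(K)$ times a constant.

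\textbf{Value computation.} One value vector is computed per representative policy $\pi_k$, indexed by $(s,\boldsymbol{h}^\varPi_{-i})\in\ran(S)\times\boldsymbol{H}^{\varPi,t}_{-i}$; by the above the number of entries is polynomial in $N$. Each entry is obtained from \cref{eq:countingf}. Because the operator only looks one step ahead, the inner sum runs over $\boldsymbol{H}^{o,\varPi}$ at $t=0$ of \cref{eq:obshist}. For each partition and representative policy, this space consists of one observation histogram over at most $n_k$ agents, so $|\boldsymbol{H}^{o,\varPi}|\le (N+1)^{K J (o-1)}$, again polynomial. The remaining per-term work is polynomial in $N$:
\begin{itemize}
\item building $\boldsymbol{h}^a$ via \cref{eq:cnt:ac} and $\boldsymbol{h}^o$ via \cref{eq:cnt:obs} (sums of constantly many counts);
\item the multinomial coefficient (products of $O(N)$ numbers, whose bit length is polynomial in $N$);
\item the successor histogram $\boldsymbol{h}^\varPi.\boldsymbol{h}^{o,\varPi}$;
\item table lookups in $\bar{T}$, $\bar{R}$ and $\bar{\varOmega}$, whose size is polynomial by \cref{thm:complexitycounting}.
\end{itemize}
The recursive term $\bar{V}_k(s',\cdot)$ is read from the stored depth-$(t-1)$ vectors, as in the ground operator (\cref{thm:theo6}), so no deep recursion is unfolded. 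Multiplying these factors gives a polynomial in $N$ per depth.

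\textbf{Pruning.} I expect pruning to be the main obstacle, since one must argue the linear programme is polynomial and account for the while-loop. For each candidate $\pi_k$, the LP of \cref{app:lp:cnt} has one variable per remaining representative policy plus $\epsilon$, a constant number, and one constraint per pair $(s,\boldsymbol{h}^\varPi_{-i})$, polynomially many. Its coefficients are the computed values, whose encoding length is polynomial, so an interior-point or ellipsoid method solves it in time polynomial in $N$. The while-loop removes at least one representative policy per successful iteration, so it runs at most $\sum_k|\varPi_k^t|$ rounds, each testing at most that many candidates. Both bounds are constants, independent of $N$, which is the key point distinguishing this from \cite{BraGeLaMo22}, where the candidate set itself grew with $N$.

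Summing the three steps and multiplying by $\tau$ yields a runtime polynomial in $N$. The degree depends on $K$, $a$, $o$ and $\tau$, but these are fixed by assumption.
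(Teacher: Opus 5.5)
Your proposal is correct and follows the paper's own route: the same three-step split into backup, value computation and pruning. As in the paper, the backup is independent of $N$, the sum in \cref{eq:countingf} runs only over one-step ($t=0$) counted observations and so is polynomially bounded, and pruning reduces to a polynomial-size LP. You also make explicit several points the paper leaves implicit: the bit lengths of multinomial coefficients and LP coefficients, the constant bound on while-loop iterations, and reading successor values from stored depth-$(t-1)$ vectors.

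One small slip: the LP in \cref{app:lp:cnt} actually has the polynomially many variables $b_k(s,\boldsymbol{h}_{-i})$ and only constantly many policy constraints, so what you describe is its dual. Both forms have size polynomial in $N$, so your conclusion is unaffected.
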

The full proof can be found in App.~\ref{app:thm7}, which analyzes the polynomial runtime of the backup, value calculation, and pruning steps.

\Cref{app:tiger} shows the well-known DecTiger model \cite{NaiTaYoPyMa03} as ground, counting, and policy-counted DecPOMDPs to highlight the similarities and differences between the models.
In summary, policy-counted dynamic programming is able to use the compact encoding of policy-counted DecPOMDPs to generate a solution that is equivalent to the ground solution in time polynomial in $N$ instead of exponential in $N$, assuming all other parameters to be fixed.
As the DecTiger example shows, the pay-off only arises if $N$ is sufficiently large, i.e., if the assumption of $N \ll K$ indeed holds.
Otherwise the counting of policies introduces too much overhead.
Future work is left to deal with the remaining exponent that can still become rather large.

\section{Conclusion}
This paper presents policy-counted DecPOMDPs, which exploit a partitioned agent set under certain structural assumptions to reduce the complexity from exponential to polynomial in the number of agents, assuming all other parameters to be fixed.
Specifically, we are able to use representative policies per partition, enabling counting how many agents follow which policy, which allows for a compact representation and considerably reduces the model's size, computing cost, and policy space, leading to tractability in agent numbers assuming a fixed number of partitions, horizon, and representative action and observation spaces.
Additionally, we present a policy-counted dynamic programming for policy-counted DecPOMDPs that updates policies and values as well as prunes policies directly at the partition level. 
As a consequence, policy-counted DecPOMDPs offer a lot of potential for large-scale applications such as nanoscale medical systems. 

In future research, we aim to further enhance scalability by working on policy encodings as well as applying lifting to large state spaces, to enable additional complexity reductions.
Another idea is to look at approximations, considering that histograms are rarely needed in count steps of $1$.

\bibliographystyle{splncs04}
\bibliography{lifted_inf_utf8}

\begin{thebibliography}{10}
\providecommand{\url}[1]{\texttt{#1}}
\providecommand{\urlprefix}{URL }
\providecommand{\doi}[1]{https://doi.org/#1}

\bibitem{BraGeLaMo22}
Braun, T., Gehrke, M., Lau, F., M{\"o}ller, R.: {Lifting in Multi-agent Systems
  under Uncertainty}. In: UAI-22 Proc. of the 38th Conference on Uncertainty in
  Artificial Intelligence. pp.~1--8. AUAI Press (2022)

\bibitem{BraMo16a}
Braun, T., M{\"o}ller, R.: {Lifted Junction Tree Algorithm}. In: Proc. of {KI}
  2016: Advances in Artificial Intelligence. pp. 30--42. Springer (2016)

\bibitem{BroTaMeDaRa11}
Van~den Broeck, G., Taghipour, N., Meert, W., Davis, J., De~Raedt, L.: {Lifted
  Probabilistic Inference by First-order Knowledge Compilation}. In: IJCAI-11
  Proc. of the 22nd International Joint Conference on Artificial Intelligence.
  pp. 2178--2185. IJCAI Organization (2011)

\bibitem{CarJaBhSpTo21}
Carr, S., Jansen, N., Bharadwaj1, S., Spaan, M.T.J., Topcu, U.: Safe policies
  for factored partially observable stochastic games. In: RSS-21 Proc. of
  Robotics: Science and Systems XVII. pp. 1--11. RSS Foundation (2021)

\bibitem{EmeGoScTh04}
Emery-Montemerlo, R., Gordon, G., Schneider, J., Thrun, S.: Approximate
  solutions for partially observable stochastic games with common payoffs. In:
  AAAMAS-04 Proc. of the 3rd International Joint Conference on Autonomous
  Agents and Multiagent Systems. pp. 136--143. IEEE (2004)

\bibitem{HanBeZu04}
Hansen, E.A., Bernstein, D.S., Zilberstein, S.: Dynamic programming for
  partially observable stochastic games. In: AAAI-04 Proc. of the 19th National
  Conference on Artificial Intelligence. vol.~4, pp. 709--715 (2004)

\bibitem{HorBo19}
Hor{\'a}k, K., Bo{\v{s}}ansk{\`y}, B.: Solving partially observable stochastic
  games with public observations. In: Proc. of the AAAI conference on
  Artificial Intelligence. pp. 547--552. AAAI Press (2019)

\bibitem{HorBoKiKa19}
Hor{\'a}k, K., Bo{\v{s}}ansk{\`y}, B., Kiekintveld, C., Kamhoua, C.: {Compact
  Representation of Value Function in Partially Observable Stochastic Games}.
  In: IJCAI-19 Proc. of the 28th International Joint Conference on Artificial
  Intelligence. pp. 350--356. IJCAI Organisation (2019)

\bibitem{HorBoPe17}
Hor{\'a}k, K., Bo{\v{s}}ansk{\`y}, B., P{\v{e}}chou{\v{c}}ek, M.: {Heuristic
  Search Value Iteration for One-sided Partially Observable Stochastic Games}.
  In: AAAI-17 Proc. of the 31st AAAI Conference on Artificial Intelligence. pp.
  558--564 (2017)

\bibitem{karabulut2024lifting}
Karabulut, N.N., Braun, T.: Lifting partially observable stochastic games. In:
  International Conference on Scalable Uncertainty Management. pp. 201--216.
  Springer (2024)

\bibitem{karabulut2025counting}
Karabulut, N.N., Braun, T.: Counting agents in partially observable stochastic
  games. In: European Conference on Symbolic and Quantitative Approaches with
  Uncertainty. pp. 207--222. Springer (2025)

\bibitem{koops2023recursive}
Koops, W., Jansen, N., Junges, S., Simao, T.D.: {Recursive Small-step
  Multi-agent A* for Dec-POMDPs}. In: IJCAI-23 Proceedings of the 32nd
  International Joint Conference on Artificial Intelligence. pp. 5402--5410.
  IJCAI Organization (2023)

\bibitem{koops2024approximate}
Koops, W., Junges, S., Jansen, N.: {Approximate Dec-POMDP Solving Using
  Multi-agent A*}. In: IJCAI-24 Proceedings of the 33nd International Joint
  Conference on Artificial Intelligence. pp. 6743--6751. IJCAI Organization
  (2024)

\bibitem{KumZi09}
Kumar, A., Zilberstein, S.: Dynamic programming approximations for partially
  observable stochastic games. In: FLAIRS-09 Proc. of the 22nd International
  Florida Artificial Intelligence Research Society Conference. AAAI Press
  (2009)

\bibitem{MarBrMoGe26}
Marwitz, F., Braun, T., M{\"o}ller, R., Gehrke, M.: {Lifted Forward Planning in
  Relational Factored Markov Decision Processes with Concurrent Actions}. In:
  AAMAS-26 Proc. of the 25th International Conference on Autonomous Agents and
  Multiagent Systems (2026), \url{https://arxiv.org/pdf/2505.22147v2}

\bibitem{MilZeKeHaKa08}
Milch, B., Zettelmoyer, L.S., Kersting, K., Haimes, M., Kaelbling, L.P.:
  {Lifted Probabilistic Inference with Counting Formulas}. In: AAAI-08 Proc. of
  the 23rd AAAI Conference on Artificial Intelligence. pp. 1062--1068. AAAI
  Press (2008)

\bibitem{NaiTaYoPyMa03}
Nair, R., Tambe, M., Yokoo, M., Pynadath, D.V., Marsella, S.: {Taming
  Decentralized POMDPs: Towards Efficient Policy Computation for Multiagent
  Settings}. In: IJCAI-03 Proc. of the 18th International Joint Conference on
  Artificial Intelligence. pp. 705--711. IJCAI Organization (2003)

\bibitem{NieBr14}
Niepert, M., Van~den Broeck, G.: {Tractability through Exchangeability: A New
  Perspective on Efficient Probabilistic Inference}. In: AAAI-14 Proc. of the
  28th AAAI Conference on Artificial Intelligence. pp. 2467--2475. AAAI Press
  (2014)

\bibitem{OliAm16}
Oliehoek, F.A., Amato, C.: {A Concise Introduction to Decentralised POMDPs}.
  Springer (2016)

\bibitem{oliehoek2009lossless}
Oliehoek, F.A., Whiteson, S., Spaan, M.T.: {Lossless Clustering of Histories in
  Decentralized POMDPs}. In: AAMAS-09 Proceedings of the 8th International
  Conference on Autonomous Agents and Multiagent Systems. pp. 577--584. IFAAMAS
  (2009)

\bibitem{Poo03}
Poole, D.: {First-order Probabilistic Inference}. In: IJCAI-03 Proc. of the
  18th International Joint Conference on Artificial Intelligence. pp. 985--991.
  IJCAI Organization (2003)

\bibitem{RusNo21}
Russell, S., Norvig, P.: {Artificial Intelligence: A Modern Approach}. Pearson
  (2021)

\bibitem{SanBo09}
Sanner, S., Boutilier, C.: {Practical Solution Techniques for First-order
  MDPs}. Artificial Intelligence Journal  \textbf{173},  748--788 (2009)

\bibitem{SanKe10}
Sanner, S., Kersting, K.: {Symbolic Dynamic Programming for First-order
  POMDPs}. In: AAAI-10 Proc. of the 24th AAAI Conference on Artificial
  Intelligence. pp. 1140--1146. AAAI Press (2010)

\bibitem{seuken2007memory}
Seuken, S., Zilberstein, S.: {Memory-Bounded Dynamic Programming for
  DEC-POMDPs}. In: IJCAI-07 Proceedings of the 21st International Joint
  Conference on Artificial Intelligence. pp. 2009--2015. IJCAI Organization
  (2007)

\bibitem{szer2006point}
Szer, D., Charpillet, F.: Point-based dynamic programming for dec-pomdps. In:
  AAAI. vol.~6, pp. 1233--1238 (2006)

\bibitem{szer2012maa}
Szer, D., Charpillet, F., Zilberstein, S.: {MAA*: A Heuristic Search Algorithm
  for Solving Decentralized POMDPs}. In: UAI-05 Proceedings of the 21st
  Conference on Uncertainty in Artificial Intelligence. pp. 576--583. ACM
  (2005)

\bibitem{TagFiDaBl13}
Taghipour, N., Fierens, D., Davis, J., Blockeel, H.: {Lifted Variable
  Elimination: Decoupling the Operators from the Constraint Language}. Journal
  of Artificial Intelligence Research  \textbf{47}(1),  393--439 (2013)

\bibitem{TomHoArBoCh21}
Tom{\'a}{\v{s}}ek, P., Hor{\'a}k, K., Aradhye, A., Bo{\v{s}}ansk{\`y}, B.,
  Chatterjee, K.: Solving partially observable stochastic shortest-path games.
  In: IJCAI-21 Proc. of the 30th International Joint Conference on Artificial
  Intelligence. pp. 4182--4189. IJCAI Organisation (2021)

\bibitem{vaidya1989speeding}
Vaidya, P.M.: {Speeding-up Linear Programming Using Fast Matrix
  Multiplication}. In: 30th Annual Symposium on Foundations of Computer
  Science. pp. 332--337. IEEE Computer Society (1989)

\bibitem{WigOlRo16}
Wiggers, A.J., Oliehoek, F.A., Roijers, D.M.: Structure in the value function
  of two-player zero-sum games of incomplete information. In: ECAI-16 Proc. of
  the 22nd European Conference on Artificial Intelligence. pp. 1628--1629. IOS
  Press (2016)

\end{thebibliography}

\newpage

\title{The Curious Case of Exploding DecPOMDPs:\\ Containing the Fire through Policy Counting\\(Supplementary Material)}
\titlerunning{The Curious Case of Exploding DecPOMDPs (Supp. Mat.)}
\author{}
\authorrunning{N.\ N.\ Karabulut and T. Braun}
%
\institute{}
\maketitle

\appendix
\renewcommand{\theHsection}{app.\thesection}
\section{Linear Programmes}
\label{app:lp}
This section provides the linear programmes used during the pruning step in dynamic programming, which repeats the pruning condition from the main paper and then lists the linear programme.

\subsection{Ground DecPOMDP} \label{app:lp:gr}

Formally, a policy $\pi_{i,j}^{t}$ is pruned if the following holds, 
\begin{align*}
    \begin{split}
	\forall s\in \ran(S), \boldsymbol{\pi}_{-i}^{t} \in \boldsymbol{\varPi}_{-i}^t \exists {\pi}_{i,j'}^{t} \in \varPi_i^{t} : 
    V_i'(s, {\pi}_{i,j}^{t} \circ \boldsymbol{\pi}_{-i}^{t}) \leq V'_i(s,{\pi}_{i,j'}^{t} \circ \boldsymbol{\pi}_{-i}^{t}) 
    \end{split}
\end{align*}
which can be solved by solving the following linear programme (prune $\pi_{i,j}^{t}$ if $d < 0$):
 {\allowdisplaybreaks
 \begin{align*} 
 	\text{variables: }		& b_i(s,\boldsymbol{\pi}_{-i}), d\\
 	\text{maximise: }		& d\\
 	\text{constraints: }	& \forall {\pi}_{i,j'}^{t} \in \varPi_i^{t}\\
 		 					&\ \ 	\sum_{s\in \ran(S)} \sum_{\boldsymbol{\pi}_{-i} \in \boldsymbol{\varPi}_{-i}^t} b_i(s, \boldsymbol{\pi}_{-i}) V_i(s,{\pi}_{i,j}^{t},\boldsymbol{\pi}_{-i}^{t}) \\
 							&\ \ \ \ - \sum_{s\in \ran(S)} \sum_{\boldsymbol{\pi}_{-i} \in \varPi_{-i}^t} b_i(s, \boldsymbol{\pi}_{-i}) V_i(s,{\pi}_{i,j'}^{t},\boldsymbol{\pi}_{-i}^{t}) - d \leq 0\\
 							& \sum_{s\in \ran(S)} \sum_{\boldsymbol{\pi}_{-i} \in \varPi_{-i}^t} b_i(s, \boldsymbol{\pi}_{-i}) = 1, b_i(s, \boldsymbol{\pi}_{-i}) > 0
 \end{align*}
 }

\subsection{Policy-Counted DecPOMDP}\label{app:lp:cnt}
Formally, a representative policy $\pi_{k}^{t}$ is pruned if the following holds, which can still be solved using a linear program:
\begin{align*}
    \begin{split}
	\forall s\in \ran(S), \boldsymbol{h}_{-i}^{t} \in \boldsymbol{H}_{-i}^t \exists {\pi}_{l \neq k}^{t} \in \varPi_k^{t} : 
    \bar{V}_k' (s, {\pi}_{k}^{t} \oplus \boldsymbol{h}_{-i}^{t}) \leq \bar{V}'_k(s,{\pi}_{l}^{t} \oplus \boldsymbol{h}_{-i}^{t}),
    \end{split}
\end{align*}
 which can be solved by solving the following linear programme (prune $\pi_{k,l}^{t}$ if $d < 0$):
 {\allowdisplaybreaks
 \begin{align*}
 	\text{variables: }		& b_k(s, \boldsymbol{h}_{-i}), d\\
 	\text{maximise: }		& d\\
 	\text{constraints: }	& \forall {\pi}_{k,l'}^{t} \in \varPi_k^{t}\\
 		 					&\ \ 	\sum_{s\in \ran(S)} \sum_{\boldsymbol{h}_{-i} \in \boldsymbol{H}_{-i}^t} b_k(s, \boldsymbol{h}_{-i}) V_k(s,{\pi}_{k,l}^{t} \oplus \boldsymbol{h}_{-i}^{t}) \\
 							&\ \ \ \ - \sum_{s\in \ran(S)} \sum_{\boldsymbol{h}_{-i} \in \boldsymbol{H_{-i}^t}} b_k(s, \boldsymbol{h}_{-i}) V_k(s,{\pi}_{k,l'}^{t} \oplus \boldsymbol{h}_{-i}^{t}) - d \leq 0\\
 							& \sum_{s\in \ran(S)} \sum_{\boldsymbol{h}_{-i} \in \boldsymbol{H_{-i}^t}} b_k(s, \boldsymbol{h}_{-i}) = 1, b_k(s, \boldsymbol{h}_{-i}) > 0
 \end{align*}
 }

\section{Full Proofs}
\label{app:proofs}

\subsection{Equivalence between a Ground DecPOMDP and a Counting DecPOMDP}\label{app:countingequiv}
This theorem shows equivalence between a ground DecPOMDP and a counting DecPOMDP based on \cref{eq:assumpt:1,eq:asst2:counting}.
\begin{theorem}
    A ground DecPOMDP $M$ that meets \cref{eq:assumpt:1,eq:asst2:counting} is equivalent to a counting DecPOMDP $\bar{M}_c$. 
\end{theorem}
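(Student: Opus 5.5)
We prove the equivalence by construction in both directions, showing that the model components correspond one to one. Here ``equivalent'' means that $T$, $R$, and $\varOmega$ of $M$ and $\bar{T}$, $\bar{R}$, and $\bar{\varOmega}$ of $\bar{M}_c$ return the same values on corresponding inputs. The partitioning $\bar{\boldsymbol{I}}$ is taken to be the $K$ partitions $\mathfrak{I}_1,\dots,\mathfrak{I}_K$ for which \cref{eq:assumpt:1,eq:asst2:counting} hold. The state variable and horizon are copied unchanged.

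\emph{Ground to counting.} By \cref{eq:assumpt:1}, all agents in $\mathfrak{I}_k$ share $\ran(A_k)$ and $\ran(O_k)$, so the CRVs $\#_{\mathfrak{I}_k}[A_k]$ and $\#_{\mathfrak{I}_k}[O_k]$ are well defined. Define the histogram map $\eta_k : \ran(A_k)^{n_k} \to \ran(\#_{\mathfrak{I}_k}[A_k])$, which sends a tuple $\boldsymbol{a}_k$ to its histogram of counts, and define the analogous map for observations.

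The key lemma is that two tuples have the same histogram iff one is a permutation $\theta$ of the other. Hence the fibres of $\eta_k$ are exactly the orbits of the symmetric group on $\mathfrak{I}_k$. By \cref{eq:asst2:counting}, applied partition by partition, $T$ is constant on each product of orbits. So we can set
\begin{align*}
\bar{T}(s', s, (h_k)_{k=1}^K) = T(s', s, (\boldsymbol{a}_k)_{k=1}^K)
\end{align*}
for any choice of $\boldsymbol{a}_k \in \eta_k^{-1}(h_k)$. Every histogram with counts summing to $n_k$ has a nonempty preimage, so $\bar{T}$ is total, and the definition is independent of the representatives chosen. $\bar{R}$ and $\bar{\varOmega}$ are defined in the same way.

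\emph{Counting to ground.} Given $\bar{M}_c$, define
\begin{align*}
T(s', s, \boldsymbol{a}) = \bar{T}(s', s, (\eta_k(\boldsymbol{a}_k))_{k=1}^K),
\end{align*}
and define $R$ and $\varOmega$ analogously. The agent set is $\bigcup_k \mathfrak{I}_k$, and each agent $i \in \mathfrak{I}_k$ gets ranges $\ran(A_k)$ and $\ran(O_k)$. Then \cref{eq:assumpt:1} holds by construction. \Cref{eq:asst2:counting} also holds, because $\eta_k(\theta(\boldsymbol{a}_k)) = \eta_k(\boldsymbol{a}_k)$.

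Finally, the two constructions are mutually inverse on the level of function values. Since the constructed functions agree pointwise on corresponding inputs, equivalence follows.

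One step deserves care, though it is not deep: $\varOmega$ is a distribution over joint observations. We do not reinterpret $\bar{\varOmega}$ as a distribution over histograms. Instead, $\bar{\varOmega}(h, s)$ stores the common value of $\varOmega(\boldsymbol{o}, s)$ for each individual ground tuple $\boldsymbol{o}$ with histogram $h$. Probability mass over histograms is then recovered by multiplying with the multinomial coefficient of $h$, as in \cref{eq:eu:cnt}. Under this reading the ground-to-counting map is lossless and needs no renormalisation.
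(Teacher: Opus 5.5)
Your proposal is correct and takes the same route as the paper's proof. Both argue by construction in both directions: all permutations of a partition's actions (observations) are collapsed into a single histogram entry, and conversely each histogram entry is expanded back over all its permutations. Your additions make explicit what the paper leaves implicit without changing the argument: well-definedness via the fibres of $\eta_k$ being permutation orbits, totality, the check that the two constructions are mutually inverse, and the per-tuple reading of $\bar{\varOmega}(h,s)$, which matches the paper's DecTiger remark on multinomial coefficients.
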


\begin{proof}
    To convert a DecPOMDP $M = (\boldsymbol{I},S,\boldsymbol{A},\boldsymbol{O},T,R,\varOmega, \tau)$ into a counting DecPOMDP $\bar{M}_c = (\bar{\boldsymbol{I}},S,\bar{\boldsymbol{A}}_c,\bar{\boldsymbol{O}}_c,\bar{T},\bar{R}, \bar{\varOmega}, \tau)$, we apply \cref{eq:assumpt:1,eq:asst2:counting}.
    First, agents within the same partition share the same action and observation spaces, meaning that for any $i,j \in \mathfrak{I}_k$, $A_i=A_j$ and $O_i=O_j$, which is provided by \cref{eq:assumpt:1}.
    For each partition $\mathfrak{I}_k$, we introduce the CRVs $\#_{\mathfrak{I}_k}[A_k]$ and $\#_{\mathfrak{I}_k}[O_k]$ for actions and observations, whose ranges are all histograms over the local action space $\ran(A_k)$ and observation space $\ran(O_k)$. 
    To build the transition, sensor, and reward functions, based on \cref{eq:asst2:counting}, we replace $(A_i)_{i \in \mathfrak{I}_k}$ and $(O_i)_{i \in \mathfrak{I}_k}$ with CRVs $\#_{\mathfrak{I}_k}[A_k]$ and $\#_{\mathfrak{I}_k}[O_k]$ in the inputs of the transition, reward, and sensor function for each partition, with a histogram $h_k$ mapping to $\rho$ for all permutations of a partition input $\boldsymbol{a}_k$ or $\boldsymbol{o}_k$ mapping to $\rho$ for which $h$ is the histogram representation.\\
    To convert a counting DecPOMDP into a ground DecPOMDP fulfilling \cref{eq:assumpt:1,eq:asst2:counting}, we essentially reverse the steps above:
    The agent set $\boldsymbol{I}$ is the union of the partitions $\bigcup_{k=1}^K \mathfrak{I}_k$.
    The action and observation sets are given by $\{A_{k,i}\}_{i \in \mathfrak{I}_k}$ and $\{O_{k,i}\}_{i \in \mathfrak{I}_k}$, fulfilling \cref{eq:assumpt:1}.
    In the transition, reward, and sensor function, the CRV inputs are replaced by the action and observation sets, with a histogram $h$ mapping to $\rho$ being replaced by a set of mappings over all permutations of the values that agents can take according to the counts in $h$, all mapping to $\rho$, thereby leading to the functions fulfilling \cref{eq:asst2:counting}.
\end{proof}

\subsection{Full Proof for Theorem 2}\label{app:thm2}
We repeat \cref{thm:equiv} and then provide its proof.
\begin{theorem}
    A ground DecPOMDP $M$ that meets \cref{eq:assumpt:1,eq:asst2:counting} is equivalent to a counting DecPOMDP $\bar{M}$.
\end{theorem}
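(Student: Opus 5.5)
The proof goes by construction in both directions, mirroring the argument for the equivalence of a ground DecPOMDP with a counting DecPOMDP $\bar{M}_c$ (App.~\ref{app:countingequiv}). The only difference between \cref{def:decpomdp:counold} and \cref{def:decpomdp:coun} is that the action and observation sets $\bar{\boldsymbol{A}}$, $\bar{\boldsymbol{O}}$ are now plain (representative) variables rather than CRVs. The functions $\bar{T}$, $\bar{R}$, $\bar{\varOmega}$ still take CRV inputs $\bar{\boldsymbol{A}}_c$, $\bar{\boldsymbol{O}}_c$. So the model-level conversion of $T$, $R$, $\varOmega$ can be reused essentially verbatim, and the new work lies in the action and observation sets.

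\emph{From $M$ to $\bar{M}$.} Starting from $M=(\boldsymbol{I},S,\boldsymbol{A},T,R,\boldsymbol{O},\varOmega,\tau)$ meeting \cref{eq:assumpt:1,eq:asst2:counting}, I proceed in three steps.
\begin{enumerate*}[label=(\roman*)]
\item Take the partitioning $\{\mathfrak{I}_k\}_{k=1}^K$ as $\bar{\boldsymbol{I}}$.
\item By \cref{eq:assumpt:1}, all $A_i$ with $i\in\mathfrak{I}_k$ share one range, and likewise all $O_i$. Pick a representative variable $A_k$ with $\ran(A_k)=\ran(A_i)$ and $O_k$ analogously, giving $\bar{\boldsymbol{A}}=\{A_k\}_k$ and $\bar{\boldsymbol{O}}=\{O_k\}_k$.
\item By \cref{eq:asst2:counting}, $T$, $R$ and $\varOmega$ are invariant under permutations of $\boldsymbol{a}_k$ (resp.\ $\boldsymbol{o}_k$) within each partition. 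Hence each function factors through the histogram map $\boldsymbol{a}_k\mapsto h_k\in\ran(\#_{\mathfrak{I}_k}[A_k])$. Define $\bar{T}(s',s,\boldsymbol{h}^a):=T(s',s,\boldsymbol{a})$ for any $\boldsymbol{a}$ whose per-partition histograms equal $\boldsymbol{h}^a$; well-definedness is exactly \cref{eq:asst2:counting}. Define $\bar{R}$ and $\bar{\varOmega}$ the same way.
\end{enumerate*}

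\emph{From $\bar{M}$ to $M$.} Set $\boldsymbol{I}=\bigcup_k\mathfrak{I}_k$ and give each $i\in\mathfrak{I}_k$ a copy $A_i$, $O_i$ of $A_k$, $O_k$, so \cref{eq:assumpt:1} holds. Define $T(s',s,\boldsymbol{a}):=\bar{T}(s',s,\boldsymbol{h}^a(\boldsymbol{a}))$, where $\boldsymbol{h}^a(\boldsymbol{a})$ is the tuple of partition histograms of $\boldsymbol{a}$, and define $R$ and $\varOmega$ analogously. Because permuting agents within a partition leaves the histogram unchanged, \cref{eq:asst2:counting} holds automatically.

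Finally, I argue that the two constructions are mutually inverse, up to renaming of the representative variables. Two further points must hold. First, the local policy spaces agree: every agent in $\mathfrak{I}_k$ has policy space $\varPi_k$, since policies are built only from $\ran(A_k)$ and $\ran(O_k)$. Second, every ground joint policy induces a joint counted policy in $\boldsymbol{H}^\varPi$ (\cref{def:cntpol}) by counting agents per representative policy, and every counted policy is realised by some ground joint policy. This ensures equivalence at the level of the policy spaces the models range over, not just of the functions.

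The main obstacle is making ``equivalent'' precise. The semantics of $\bar{M}$ is given over $\boldsymbol{H}^\varPi$ while that of $M$ is given over $\varPi_M$, so the correspondence between them is many-to-one rather than a bijection. I would therefore state equivalence as \emph{model-level} equivalence: the functions agree on all inputs via the histogram map. I would keep it distinct from utility equivalence, which is deferred to \cref{thm:correctnesofMEU}. The many-to-one policy correspondence would be noted only as the surjective map described above.
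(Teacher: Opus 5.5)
Your proposal is correct and follows the paper's proof. Both go by construction in both directions. From $M$ to $\bar{M}$, you store one representative $A_k$, $O_k$ per partition via \cref{eq:assumpt:1} and collapse all permutation-equivalent inputs of $T$, $R$, $\varOmega$ into a single histogram mapping, which is well defined by \cref{eq:asst2:counting}. From $\bar{M}$ to $M$, you copy the representative variables to each agent and expand each histogram mapping back over all inputs with that histogram. Your extra remarks on the many-to-one map from $\varPi_M$ onto $\boldsymbol{H}^\varPi$ go beyond what the paper argues at this point, since it defers policy- and utility-level equivalence to \cref{thm:correctnesofMEU}, but they do not change the route.
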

\begin{proof}
    First, we convert a DecPOMDP $M = (\boldsymbol{I},\allowbreak S,\allowbreak \boldsymbol{A},\allowbreak \boldsymbol{O},\allowbreak T,\allowbreak {R},\allowbreak \boldsymbol{\varOmega}, \tau)$, in which \cref{eq:assumpt:1,eq:asst2:counting} hold, into a counting DecPOMDP $\bar{M} = (\bar{\boldsymbol{I}},\allowbreak \bar{\boldsymbol{S}},\allowbreak  \bar{\boldsymbol{A}},\allowbreak \bar{\boldsymbol{O}},\allowbreak \bar{{T}},\allowbreak \bar{{R}}, \boldsymbol{\varOmega}, \tau)$. 
    Agents that share the same action and observation spaces, meaning $A_i = A_j$ and $O_i = O_j$ for $i, j \in \boldsymbol{I}$ by \cref{eq:assumpt:1}, as well as exhibit symmetric behaviour as in \cref{eq:assumpt:2} are grouped into a partition, storing for each partition a decision variable $A_k = A_i = A_j$ and an observation variable $O_k = O_i = O_j$.
    To construct the counted functions $\bar{T}$, $\bar{R}$, and $\bar{\varOmega}$ based on \cref{eq:asst2:counting}, we replace $(A_i)_{i \in \mathfrak{I}_k}$ with a CRV $\#_{\mathfrak{I}_k}[A_k]$ in the inputs of the transition, reward, and sensor function for each partition. 
    We do the same with the observation variables in the sensor function.
    For each function and each partition, we then construct from a mapping with a partition action $\boldsymbol{a}_k$ (partition observation $\boldsymbol{o}_k$) mapping to an outcome $\rho$ a new mapping with a histogram $h_k$ replacing $\boldsymbol{a}_k$ ($\boldsymbol{o}_k$), counting how often each $a \in \ran(A_k)$ ($o \in \ran(O_k)$) occurs in $\boldsymbol{a}_k$ ($\boldsymbol{o}_k$) for the counts in $h_k$, and discard all mappings with permutations of $\boldsymbol{a}_k$ ($\boldsymbol{o}_k$).\\
    Second, we convert a counting DecPOMDP into a DecPOMDP $M$ that meets \cref{eq:assumpt:1,eq:asst2:counting} by setting $\boldsymbol{I} = \bigcup_{k} \mathfrak{I}_k$, $A_i = A_k$ and $O_i = O_k$ for all $i \in \mathfrak{I}_k$ in each partition $\mathfrak{I}_k$, making $M$ fulfil \cref{eq:assumpt:1}, and then essentially expanding the counted functions into their ground versions, i.e., replacing each CRV $\#_{\mathfrak{I}_k}[V_k]$, $V \in \{A,O\}$, with a set of variables $V_{k,1}, \dots, V_{k,n_k}$ as inputs and mapping the different inputs $\boldsymbol{v}$ to an outcome $\rho$, whenever the histogram representation of $\boldsymbol{v}$ maps to $\rho$, which makes $M$ meet \cref{eq:asst2:counting}.
\end{proof}

\subsection{Full Proof of Theorem 3} \label{app:thm3}
We repeat \cref{thm:complexitycounting} and then provide its proof.
\begin{theorem} 
    For a fixed number of partitions $K$, fixed action and observation spaces, and a fixed horizon $\tau$, the model complexity of a counting DecPOMDP $\bar{M}_c$ is polynomial in $N$. 
\end{theorem}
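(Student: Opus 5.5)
The plan is to bound the size of each component of the tuple $(\bar{\boldsymbol{I}},\bar S,\bar{\boldsymbol{A}}_c,\bar T,\bar R,\bar{\boldsymbol{O}}_c,\bar\varOmega,\tau)$ separately and then sum the bounds. Any model built from polynomially many entries of constant size is polynomial in $N$. The only components whose size can depend on $N$ are the partitioning $\bar{\boldsymbol{I}}$, the ranges of the CRVs, and the tables $\bar T$, $\bar R$, $\bar\varOmega$ indexed by them. The state space, the horizon $\tau$, and the representative action and observation ranges do not depend on $N$.

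\textbf{Step 1: the partitioning.} The partitioning $\bar{\boldsymbol{I}}$ can be stored with $O(K)$ numbers $n_k\le N$, or explicitly with $O(N)$ agent identifiers. Either way it is polynomial.

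\textbf{Step 2: the CRV ranges.} For each partition $\mathfrak{I}_k$, the range of $\#_{\mathfrak{I}_k}[A_k]$ is the set of histograms over $\ran(A_k)$ whose counts sum to $n_k$. Its size is $\binom{n_k+|\ran(A_k)|-1}{|\ran(A_k)|-1}\le (n_k+1)^{a}\le (N+1)^a$, with $a=\max_k|\ran(A_k)|$, as already used in the paper following \cite{MilZeKeHaKa08}. The same argument gives at most $(N+1)^o$ histograms for $\#_{\mathfrak{I}_k}[O_k]$. Taking the Cartesian product over the $K$ partitions bounds the joint counted action range $|\ran(\bar{\boldsymbol{A}}_c)|$ by $(N+1)^{aK}$ and the joint counted observation range $|\ran(\bar{\boldsymbol{O}}_c)|$ by $(N+1)^{oK}$.

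\textbf{Step 3: the tables.} These bounds turn directly into table sizes. $\bar T$ has $O(s^2 (N+1)^{aK})$ entries, $\bar R$ has $O(s (N+1)^{aK})$ entries, and $\bar\varOmega$ has $O(s (N+1)^{oK})$ entries, where $s=|\ran(S)|$. Each entry is a single number, and each histogram index takes $O(\max(a,o)\log N)$ bits. The total size is therefore $O\big((s^2+s)(N+1)^{aK}+s(N+1)^{oK}\big)$ up to logarithmic factors.

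\textbf{Step 4: conclusion.} Since $K$, $a$, $o$ and $\tau$ are fixed, the exponents $aK$ and $oK$ are constants, so the model complexity is polynomial in $N$. This contrasts with the ground sizes $O(s^2a^N)$, $O(sa^N)$, $O(so^N)$, where $N$ sits in the exponent.

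The main obstacle is stating precisely what ``model complexity'' counts and making sure nothing hides an exponential dependence. One must argue that the counted functions are fully specified by their values on histograms, which is exactly the construction in App.~\ref{app:countingequiv} relying on \cref{eq:asst2:counting}. One must also note that policies are deliberately excluded: the explosion of the policy space in the corollary concerns the semantics, not the model representation. I would make both points explicit before giving the counting bound.
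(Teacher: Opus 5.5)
Your proposal is correct and follows the same route as the paper's proof. You bound each CRV range by the binomial coefficient, which is polynomial in $n_k \le N$, take the product over the $K$ partitions, and read off table sizes of order $s^2 N^{Ka}$, $sN^{Ka}$, and $sN^{Ko}$ for $\bar T$, $\bar R$, and $\bar\varOmega$. Your bound $(n_k+1)^{a}$ is slightly more careful than the paper's $n_k^{m}$, which fails for tiny partitions such as $n_k = 1$. Your remarks on storing the partitioning and on bit lengths are harmless extras.
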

\begin{proof}
    The histogram space of a CRV for partition $\mathfrak{I}_k, k \in \{1, \dots, K\}$ of size $n_k$ with $m$ possible values has size $\binom{n_k+m-1}{m-1} \leq n_k^{m}$ \cite{MilZeKeHaKa08}.
    As we assume $K$ is fixed and $K \ll N$, $n_k$ and $N$ have the same order of magnitude, i.e., $n_k \leq N$.
    As such, due to the range sizes of their inputs, the sizes of the functions $\bar{T}$, $\bar{R}$ and $\bar{\varOmega}$ lie in
        $O(s^2 N^{Ka})$, 
        $O(s N^{Ka})$, and
        $O(s N^{Ko})$, 
    respectively, with $s = |\ran(\bar{S})|$, $a = \max_{k}|\ran(A_k)|$, and $o =\allowbreak \max_{k} |\ran(O_k)|$, depending polynomially on $N$.
\end{proof}

\subsection{Full Proof of Theorem 4} \label{app:thm4}
We repeat \cref{thm:correctnesofMEU} and then provide its proof.
\begin{theorem}
    The maximum expected utility for a ground DecPOMDP $M$ that meets \cref{eq:assumpt:1,eq:asst2:counting} is equal to the maximum expected utility of the corresponding policy-counted DecPOMDP $\bar{M}$.
    That is, $MEU(M) = MEU(\bar{M})$.
\end{theorem}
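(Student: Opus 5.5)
The plan is to define a surjective map $\phi$ from the ground joint policy space $\varPi_M$ to the joint counted policy space $\boldsymbol{H}^\varPi$, and to show that utility factors through it. By \cref{eq:assumpt:1}, every agent $i \in \mathfrak{I}_k$ has the same local policy space $\varPi_k$. So a ground joint policy $\boldsymbol{\pi}$ can be sent to $\phi(\boldsymbol{\pi}) = \boldsymbol{h}^\varPi$, where $h_k^\varPi(\pi)$ counts the agents in $\mathfrak{I}_k$ that follow $\pi$. Surjectivity is immediate: any histogram meeting $\sum_j m_j = n_k$ can be realised by assigning policies to agents accordingly. The core claim is then
\[
U_M(\boldsymbol{\pi}) = U_{\bar{M}}(\phi(\boldsymbol{\pi})) \quad \text{for all } \boldsymbol{\pi} \in \varPi_M.
\]
Given this claim, the maxima over $\varPi_M$ and over $\boldsymbol{H}^\varPi$ coincide, and any ground maximiser maps under $\phi$ to a counted maximiser. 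This yields $MEU(M)=MEU(\bar{M})$ in the sense of \cref{meuforgroundcase,meuforcountingcase}.

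To prove the claim, I will run backward induction over the recursion in \cref{eq:eu:gr} and \cref{eq:eu:cnt}, with a strengthened invariant. Fix a step $t$, a state $s^t$, and a ground joint observation history $\boldsymbol{o}^{0:t}$. Map the history to the sequence of joint counted observations $(\boldsymbol{h}^{o,\varPi})^{0:t}$ obtained as follows: for each $k$, each representative policy $\pi$ and each representative history $p$, record how many agents of $\mathfrak{I}_k$ that follow $\pi$ have local history $p$. The invariant is
\[
U_M^{\boldsymbol{\pi}}(s^t,\boldsymbol{o}^{0:t}) = U_{\bar M}^{\phi(\boldsymbol{\pi})}\bigl(s^t,(\boldsymbol{h}^{o,\varPi})^{0:t}\bigr).
\]
The invariant is well-defined only because the ground value is the same for every ground history with the same counted image. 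That holds since, by \cref{eq:asst2:counting}, $T$, $R$ and $\varOmega$ depend on actions and observations only through per-partition histograms, and agents sharing a representative policy and history are interchangeable. The induction proceeds as follows.

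\textbf{Base case and action term.} For the base case $t=\tau-1$ and for the first term at every step, check that the ground joint action $\boldsymbol{\pi}(\boldsymbol{o}^{0:t})$ has, in each partition, exactly the action histogram $h^a_k$ of \cref{eq:cnt:ac}. Each agent following $\pi$ with history $p\circ o$ takes $\pi(p\circ o)$, and summing the indicator over groups counts each action exactly once. The case $t=0$ uses root actions. Hence $R(s^t,\boldsymbol{\pi}(\boldsymbol{o}^{0:t}))=\bar R(s^t,\boldsymbol{h}^a)$, and likewise $T$ agrees with $\bar T$.

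\textbf{Inductive step.} The sum over $s^{t+1}$ is identical on both sides, so the work is in the sum over $\boldsymbol{o}^{t+1}\in\ran(\boldsymbol{O})$. I partition $\ran(\boldsymbol{O})$ by the joint counted observation $\boldsymbol{h}^{o,\varPi}$ that $\boldsymbol{o}^{t+1}$ induces, where each group $(k,\pi,p)$ splits its count according to \cref{eq:obshist}. Within one block three things hold. First, $\varOmega(\boldsymbol{o}^{t+1},s^{t+1})=\bar\varOmega(\boldsymbol{h}^o,s^{t+1})$, with $\boldsymbol{h}^o$ given by \cref{eq:cnt:obs}, by \cref{eq:asst2:counting}. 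Second, the continuation values are all equal to the counted continuation value, by the induction hypothesis. Third, the block has exactly $Mul(\boldsymbol{h}^{o,\varPi})$ elements, namely the product over all groups of the multinomial coefficients distributing the group's agents onto observations. Collapsing each block therefore gives precisely \cref{eq:eu:cnt}. Finally, the prior-weighted sum over $s^0$ with the empty history $\bot$ yields the claim.

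\textbf{Main obstacle.} I expect the main difficulty to be the bookkeeping in the inductive step. One must verify that the blocks are exactly indexed by $\boldsymbol{H}^{o,\varPi}$, with the sum constraint of \cref{eq:obshist} matching the group sizes from the previous step. One must also check that the counting of $Mul$ is correct: it is a product of multinomials over groups, not a single multinomial per partition. My plan is to handle this by making the counting explicit for a single group. A group of size $m$ splits into counts $(n_l)$ in $\binom{m}{n_1,\dots,n_o}$ ways, and since groups are disjoint the total is the product over groups. The remaining step is to confirm that agents in distinct groups never get merged, because they are distinguished by their policy or history.
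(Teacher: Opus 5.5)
Your proposal is correct and follows essentially the same route as the paper's proof. You send each ground joint policy to its per-partition policy histogram and show the ground joint actions match the counted action histograms, so $R$ and $T$ agree. You then collapse the sum over ground joint observations into joint counted observations weighted by the multinomial coefficient, with the sensor function agreeing on each block, and carry this through the recursion. What your version adds over the paper's informal argument is precision: the explicit backward induction on history-indexed values, the product-of-multinomials count, and the surjectivity of $\phi$, which the paper leaves implicit and which guarantees the counted maximum is not taken over a larger set.
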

\begin{proof}
    Let $\varPi_M$ be set of possible joint policies in the ground model $M$.
    In the policy-counted model $\bar{M}$, we convert any ground joint policy $\boldsymbol{\pi}=(\pi_1 , ... ,\pi_N)$ into a joint counted policy of histograms $\boldsymbol{h}^\varPi=(h_1, ...,h_K)$, where each $h_k$ counts the number of agents in partition $\mathfrak{I}_k$ assigned to each representative policy in $\varPi_k$.
    To establish value equivalence, consider any ground policy $\boldsymbol{\pi}$ and its corresponding policy histogram $\boldsymbol{h}^\varPi$ and an empty joint (counted observation) at the start.
    Based on \cref{thm:equiv}, the reward, transition, and observation functions return the same values between the two models given equivalent inputs.
    As a result, the immediate rewards in \cref{eq:eu:gr,eq:eu:cnt} are identical, i.e., $R(s, \boldsymbol{a}) = \bar{R}(s, \boldsymbol{h}^a)$, with the joint action in \cref{eq:eu:gr} and the joint counted action in \cref{eq:eu:cnt} being equivalent, since the same policies are followed between the ground and the policy-counted version and thus, the actions in the roots are identical, adding up to the counts in the counted joint action.
    The following sum over states is identical in both equations with the transition functions returning the same values.
    The sum over joint (counted) observations is equivalent, since every ground joint observation can be translated into a joint counted observation, with the counted version combining those ground joint observations with the same counted representation into one summand and multiplying that summand with the number of instances using the multinomial coefficient \cite{TagFiDaBl13}.
    The summand value is identical, since the sensor functions return the same value for equivalent inputs.
    As such, the calculations at the end of the first step are equivalent.\\
    Subsequent utility calculations have as the observation history an equivalent input as just argued.
    Thus, with the same arguments as before, the further joint (counted) actions are equivalent, with the action histogram being a counted representation of the ground joint action, which also holds for further joint (counted) observations.
    Therefore, the expected utilities $U_M(\boldsymbol{\pi})$ and $U_{\bar{M}}(\boldsymbol{h}^\varPi)$ are identical.
    As this can be done for every ground policy with its corresponding policy histogram, the counted policy space $\boldsymbol{H}^\varPi$ and the ground policy space $\varPi_M$ yield the same maximum expected utility, that is $MEU(M) = MEU(\bar{M})$. 
\end{proof} 

\subsection{Full Proof of Theorem 5}\label{app:thm5}
We repeat \cref{thm:compl:pol} and then provide its proof.

\begin{theorem}
    For a fixed number of partitions $K$, fixed action and observation spaces, and a fixed horizon $\tau$, the cost of evaluating a joint counted policy and the size of the overall joint counted policy space in a policy-counted DecPOMDP depend polynomially on the number of agents $N$.
\end{theorem}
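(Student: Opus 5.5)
I will prove the two claims separately, both via the stars-and-bars bound $\binom{n+m-1}{m-1} \leq (n+1)^{m-1}$ on the number of histograms distributing $n$ indistinguishable agents over $m$ values, as already used in the proof of \cref{thm:complexitycounting}. Throughout, let $s = |\ran(S)|$, $a = \max_k |\ran(A_k)|$, $o = \max_k |\ran(O_k)|$, and use $n_k \leq N$.

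\emph{Policy space.} By \cref{eq:assumpt:1}, the set of representative policies $\varPi_k$ of depth $\tau$ has size $J \leq a^{(o^\tau-1)/(o-1)}$. This is a constant under the assumption that $a$, $o$ and $\tau$ are fixed, and in particular it is independent of $N$. By \cref{def:cntpol}, $H^\varPi_k$ is the set of histograms over $J$ values summing to $n_k$, so $|H^\varPi_k| = \binom{n_k+J-1}{J-1} \leq (N+1)^{J-1}$. Hence $|\boldsymbol{H}^\varPi| \leq (N+1)^{K(J-1)}$, which is polynomial in $N$ because $K$ and $J$ are fixed. This is exactly the step where plain representative policies differ from the CRV-based construction of \cref{def:decpomdp:counold}: there, $J$ itself grew with $N$.

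\emph{Evaluation cost.} I will unroll the recursion \cref{eq:eu:cnt} into a tree over depths $t = 0,\dots,\tau-1$ and bound the number of nodes and the work per node.

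First I bound the size of $\boldsymbol{H}^{o,\varPi}$ at a given step. A counted partition observation for a fixed $\pi$ at step $t$ (\cref{def:obshist}) consists of one histogram over $o$ values for each representative history $p \in P^{0:t-1,o}_k$, of which there are at most $o^{t} \leq o^\tau$. Each such histogram has total count at most $n_k$, so there are at most $(N+1)^{o-1}$ choices for it. This gives $|H^{o,\pi}_k| \leq (N+1)^{(o-1)o^\tau}$, and then $|\boldsymbol{H}^{o,\varPi}| \leq (N+1)^{K J (o-1) o^\tau}$. The exponent is a constant.

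The recursion tree branches by a factor of $s \cdot |\boldsymbol{H}^{o,\varPi}|$ at each of the $\tau$ levels, so it has at most $\bigl(s (N+1)^{KJ(o-1)o^\tau}\bigr)^\tau$ nodes. This is still polynomial in $N$ since $\tau$ is fixed.

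At each node, the work consists of the following pieces:
\begin{itemize}
\item computing $\boldsymbol{h}^a$ via \cref{eq:cnt:ac} and $\boldsymbol{h}^o$ via \cref{eq:cnt:obs}, which are sums over $K J o^\tau o$ terms;
\item evaluating the multinomial coefficient $Mul$, which takes polynomially many arithmetic operations in $N$;
\item looking up $\bar{R}$, $\bar{T}$ and $\bar{\varOmega}$, whose tables are polynomially sized by \cref{thm:complexitycounting}.
\end{itemize}
Multiplying the node count by the per-node work gives a polynomial in $N$. Adding the final sum over $s^0$ multiplies by $s$ only.

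The main obstacle is justifying that the histograms in a counted partition observation are indexed by representative histories, and not by agents. Their count is bounded by $o^\tau$, independent of $N$, and only the entries of the histograms grow with $N$. I will argue this directly from \cref{def:reprobshist}, where $P^{0:t,o}_k = \bot \times \times_{t'=1}^t \ran(O_k)$, so that $N$ enters only the base and never the exponent.
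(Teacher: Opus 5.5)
Your proposal is correct and takes the same route as the paper's proof. You bound the joint counted policy space by $\prod_k \binom{n_k+J-1}{J-1}$ with $J = a^{(o^\tau-1)/(o-1)}$ independent of $N$, and the joint counted observation space by a product of at most $K J o^\tau$ polynomially sized histogram spaces. You are somewhat more careful than the paper, since you explicitly account for the $\tau$ recursion levels and the per-node work (multinomial coefficients, table look-ups), which the paper leaves implicit.
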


\begin{proof}
    For the cost of evaluating a joint counted policy, consider \cref{eq:eu:cnt} with a sum over states, which is of size $s = |\ran(S)|$, and a sum over joint counted observations.
    With $a = \max_{k} |\ran(A_k)|, o = \max_{k} |ran (O_k)|$, the space of joint counted observations is structured for each partition by the number of representative policies $p$ and the number of representative observation histories $h$, which are given by \cite{OliAm16}
    \begin{align*}
        p & = |\varPi_k|= a^{\frac{o^\tau-1}{o-1}}  \\ 
        h &= |P_k^{0:\tau,o}| = {o^{\tau}}  
    \end{align*}
    Each representative history can have up to $n_k \leq N$ agents following that particular history with $K \ll N$, which are distributed onto $o$ possible observations.
    Thus, the number of histograms per partition is given by 
    \begin{align*}
        H = \binom{N + o - 1}{o - 1} \leq N^o
    \end{align*}
    Therefore, the size of the joint counted observation space is given by the size of the cross product of $K$ partitions, $p$ representative policies, and $h$ representative histories, i.e., $H^{Kph}$, which is capped by
    \begin{align*}
        H^{Kph} \leq (N^o)^{Ka^{o^\tau}{o^\tau}} = N^{{Kao^{2\tau+1}}} 
    \end{align*}
    and as such, depends polynomially on $N$.
    The size of the joint counting policy space is capped by
    \begin{align*}
        \prod_{k=1}^{K} \binom{n_k + p - 1}{p -1} \leq N^{Kp} = N^{Ka^{\frac{o^\tau-1}{o-1}}}
    \end{align*}
    and thereby, also depends polynomially on $N$.
\end{proof}

\subsection{Full Proof of Theorem 6}\label{app:thm6}
We repeat \cref{thm:theo6} and then provide its proof.
\begin{theorem} 
    Using policy-counted dynamic programming on a policy-counted DecPOMDP $\bar{M}$ is equivalent to using standard dynamic programming on a ground DecPOMDP $M$, in which \cref{eq:assumpt:1,eq:asst2:counting} hold.
\end{theorem}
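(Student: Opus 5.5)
We prove the statement step by step, matching each of the three steps of the operator in \cref{alg:dp}: exhaustive backup, value computation and pruning. Throughout we use the correspondence from \cref{thm:equiv} and the proof of \cref{thm:correctnesofMEU}. A ground joint policy $\boldsymbol{\pi}$ maps to a joint counted policy $\boldsymbol{h}^\varPi$ by counting, in each partition $\mathfrak{I}_k$, how many agents follow each representative policy. Conversely, every $\boldsymbol{h}^\varPi$ is the image of exactly those ground joint policies that are permutations of one another within partitions. We carry an invariant through the induction on the depth $t$: the set of representative policies $\varPi_k^t$ kept by the counted operator equals $\varPi_i^t$ kept by the ground operator for every $i \in \mathfrak{I}_k$.

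\emph{Backup.} By \cref{eq:assumpt:1}, all agents in $\mathfrak{I}_k$ have the same action and observation sets. By the invariant, they also have the same depth-$(t-1)$ input policies. The exhaustive backup of a single agent depends only on these inputs, so it produces the same set for every agent of the partition, and this set is exactly $\varPi_k^t$. The invariant therefore survives the backup step, and $\boldsymbol{H}^\varPi$ is well defined over these sets.

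\emph{Values.} For a state $s$, a representative policy $\pi_k$ and some $\boldsymbol{\pi}_{-i}$ whose histogram is $\boldsymbol{h}_{-i}^\varPi$, we show $V_i(s,\pi_k\circ\boldsymbol{\pi}_{-i}) = \bar{V}_k(s,\pi_k\oplus\boldsymbol{h}_{-i}^\varPi)$. The proof is by induction on depth and follows the one-step argument in the proof of \cref{thm:correctnesofMEU}.
\begin{itemize}
\item The root actions of $\pi_k\circ\boldsymbol{\pi}_{-i}$ aggregate to $\boldsymbol{h}^a$ via \cref{eq:cnt:ac} with $t=0$. By \cref{eq:asst2:counting}, $R$ and $T$ therefore agree.
\item The ground sum over joint observations is regrouped by counted observations $\boldsymbol{h}^{o,\varPi}$. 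Each class consists of observation profiles that are permutations of one another within groups sharing the same representative policy. All profiles in a class have the same $\varOmega$-value (again by \cref{eq:asst2:counting}), and the class has exactly $Mul(\boldsymbol{h}^{o,\varPi})$ members.
\item Within a class, the continuation policies of the different profiles are permutations of each other within each partition. Their histogram is $\boldsymbol{h}^\varPi.\boldsymbol{h}^{o,\varPi}$, so the induction hypothesis applies to them.
\end{itemize}
Two consequences follow. First, the ground value is invariant under within-partition permutations of $\boldsymbol{\pi}_{-i}$. Second, the ground value vector indexed by $\ran(S)\times\boldsymbol{\varPi}_{-i}^t$ is the counted vector over $\ran(S)\times\boldsymbol{H}_{-i}^t$, with each entry replicated once for every ground profile that maps to the same histogram.

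\emph{Pruning.} By the value step, the quantifier over $\boldsymbol{\pi}_{-i}$ in \cref{eq:prune} can be replaced by a quantifier over the histograms $\boldsymbol{h}_{-i}$. This yields \cref{eq:prunecounting}, so a policy is pruned for agent $i$ if and only if the representative is pruned for $\mathfrak{I}_k$. Because every agent of the partition produces the same value table, pruning for one agent coincides with pruning for all of them, and the invariant is restored. For the linear programmes of \cref{app:lp}, we merge all belief weights $b_i(s,\boldsymbol{\pi}_{-i})$ that share a histogram into a single weight $b_k(s,\boldsymbol{h}_{-i})$. Feasible points and the optimal $d$ are preserved in both directions.

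The main obstacle is the value step. The one-step regrouping over observations must be matched exactly to the definition of $\boldsymbol{h}^\varPi.\boldsymbol{h}^{o,\varPi}$ and to the multinomial weights, and the count and indexing conventions need care. To handle this, we first argue that the ground value is permutation-invariant, and only afterwards collapse the equivalence classes.
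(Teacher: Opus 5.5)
Your proposal is correct and follows the paper's own three-step proof: backup via \cref{eq:assumpt:1}, value equality via the observation-regrouping argument of \cref{thm:correctnesofMEU}, and pruning once per partition because all agents of $\mathfrak{I}_k$ share one value table; you spell this out in more detail through a direct induction on depth, the cross-depth invariant and the merging of LP weights. The only point you pass over, as does the paper, is that \cref{alg:dp} prunes one agent's policy at a time and weak-dominance elimination is order-dependent (for example, two value-identical policies could be kept by different agents), so your invariant holds only for a synchronized elimination order, which is justified because removing $\pi$ from other agents' sets only shrinks $\boldsymbol{\varPi}_{-j}^t$ and thus keeps $\pi$ dominated for the remaining $j \in \mathfrak{I}_k$.
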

\begin{proof}
    We show equivalence by considering the different steps of the dynamic programming operator.\\
    \emph{Exhaustive backup: }
    Given \cref{eq:assumpt:1}, the policies for each agent in a partition $\mathfrak{I}_k$ are identical, meaning that the exhaustive backup can be performed using representative policies for each partition.\\
    \emph{Value computation: }
    Given that $MEU(\bar{M}) = MEU(M)$ by \cref{thm:correctnesofMEU} and as such $U^{\boldsymbol{h}^\varPi}_{\bar{M}} = U^{\boldsymbol{\pi}}_M$ for equivalent policies $\boldsymbol{h}^\varPi$ and $\boldsymbol{\pi}$, $\bar{V}'_k = V'_i$ for all $i \in \mathfrak{I}_k$ as the value computation is a rewriting of the utility computation, meaning that the value calculations can be performed for counted partition policies at each partition.\\
    \emph{Pruning: }
    Since $\bar{V}'_k = V'_i$ for all $i \in \mathfrak{I}_k$, \cref{eq:prune} for pruning is identical for all $i \in \mathfrak{I}_k$ given a representative policy $\pi_k \in \Pi_k$, meaning that $\pi_k$ can be checked for pruning once per partition and thus, pruning can be performed for representative policies per partition.\\
    Given the equivalence of the three steps, we can conclude that the overall procedure yields an equivalent result.
\end{proof}

\subsection{Full Proof of Theorem 7} \label{app:thm7}
We repeat \cref{thm:theo7} and then provide its proof.
\begin{theorem} 
    For a fixed number of partitions $K$, fixed action and observation spaces, and a fixed horizon $\tau$, the runtime of policy-counted dynamic programming in a policy-counted DecPOMDP $\bar{M}$ depends polynomially on the number of agents $N$.
\end{theorem}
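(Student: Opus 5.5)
The plan is to bound the cost of each of the three steps of the operator (\cref{alg:dp} with \cref{eq:countingf,eq:prunecounting}) at a single depth $t\le\tau$, and then sum over the $\tau$ applications. Throughout, let $s=|\ran(S)|$, $a=\max_k|\ran(A_k)|$, $o=\max_k|\ran(O_k)|$ and $n_k\le N$. The basic counting fact, used repeatedly as in \cref{thm:complexitycounting,thm:compl:pol}, is that the number of histograms distributing $n_k$ agents over $m$ values is $\binom{n_k+m-1}{m-1}\le N^{m}$.

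\emph{Exhaustive backup.} The backup acts only on representative policies, one set $\varPi_k^t$ per partition. Its size is $|\varPi_k^{t}| = a\,|\varPi_k^{t-1}|^{o}$, which is bounded by a constant $p = a^{\frac{o^\tau-1}{o-1}}$ that does not involve $N$. The backup therefore costs $O(Kp)$, independent of $N$. This is exactly where policy counting helps, in contrast to the exploding counting DecPOMDP $\bar{M}_c$, whose representative policies would branch over $N^o$ counted observations.

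\emph{Value computation.} For every partition $k$, every representative $\pi_k\in\varPi_k^t$, every state $s$, and every $\boldsymbol{h}_{-i}^\varPi\in\boldsymbol{H}^\varPi_{-i}$, we evaluate one instance of \cref{eq:countingf}. The number of value entries is at most $K\cdot p\cdot s\cdot N^{Kp}$, using the bound on the joint counted policy space from the proof of \cref{thm:compl:pol}. Each entry requires:
\begin{enumerate*}[label=(\roman*)]
\item forming $\boldsymbol{h}^a$ via \cref{eq:cnt:ac} with $t=0$, which costs $O(Kp)$;
\item a sum over $s$ next states;
\item a sum over $\boldsymbol{H}^{o,\varPi}$ restricted to empty histories.
\end{enumerate*}
For (iii), one histogram over $o$ observations is needed per partition and per representative policy, so $|\boldsymbol{H}^{o,\varPi}|\le N^{Kpo}$. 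Computing a multinomial coefficient and the histograms $\boldsymbol{h}^o$ and $\boldsymbol{h}^\varPi.\boldsymbol{h}^{o,\varPi}$ is polynomial in $N$. The recursive value $\bar{V}_k(s',\boldsymbol{h}^\varPi.\boldsymbol{h}^{o,\varPi})$ is looked up in the table of depth $t-1$ stored by the previous iteration, so it costs $O(1)$ and no recursion is unfolded. Hence one iteration of value computation costs $O(Kp\,s^2 N^{Kp} N^{Kpo}\cdot\mathrm{poly}(N))$, which is polynomial in $N$.

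\emph{Pruning and the main obstacle.} The main obstacle is pruning, since each pruning test is a linear programme whose size must be controlled. For a candidate $\pi_k$, the LP of \cref{app:lp:cnt} has $s\cdot|\boldsymbol{H}^\varPi_{-i}|+1\le sN^{Kp}+1$ variables and $O(p)$ constraints, with coefficients taken from the value table. LP is solvable in time polynomial in its dimensions (e.g.\ \cite{vaidya1989speeding}), so a single test is polynomial in $N$. The while-loop removes at least one representative policy per successful test, so it performs at most $Kp$ removals, and each round needs at most $Kp$ tests. This gives $O((Kp)^2)$ LPs, again polynomial. Two subtleties must be handled here.
\begin{enumerate*}[label=(\roman*)]
\item The LP coefficients themselves are polynomial-size rationals, which follows from the previous step.
\item The exponent $Kp$ or $Kpo$ is huge but constant with respect to $N$ once $K$, $a$, $o$, and $\tau$ are fixed.
\end{enumerate*}
Finally, summing the costs over at most $\tau$ iterations multiplies the bound by the constant $\tau$, which yields a runtime polynomial in $N$.
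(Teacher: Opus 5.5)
Your proposal is correct and follows the paper's proof. It uses the same three-step decomposition: the backup is independent of $N$ (at most $Kp$ representative policies), the value computation is bounded via the at most $(N^o)^{Kp}$ joint counted observations at the root, and each pruning LP has size governed by $|\boldsymbol{H}^\varPi_{-i}| \le N^{Kp}$ and is solvable in polynomial time \cite{vaidya1989speeding}. You also make explicit several points the paper leaves implicit, without changing the argument: memoising the depth-$(t-1)$ values so the recursion in \cref{eq:countingf} is a single lookup, counting the value entries and LP calls, the bit sizes of the LP coefficients, and the factor $\tau$ from iterating the operator.
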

\begin{proof}
    Again, we consider the three steps of the dynamic programming operator to show polynomial dependence on the number of agents $N$, which reuses definitions of \cref{thm:complexitycounting}.\\
    \emph{Exhaustive backup: }
    With $a = \max_{k} |\ran(A_k)|, o = \max_{k} |ran (O_k)|$, the number of policies to generate per partition is given by $p = a^{\frac{o^\tau-1}{o-1}}$ \cite{OliAm16}, meaning $Kp$ over all partitions,  which is independent of $N$.\\
    \emph{Value calculation: }
    \cref{eq:countingf} depends polynomially on $N$ since the sum over joint counted observations depends polynomially on $N$.
    In contrast to the recursive calculation of $U^{\boldsymbol{h}^\varPi}_{\bar{M}}$, we do not need to consider representative observation histories for $t>0$, meaning that the size of the space of histograms is capped by $(N^o)^{Kp} = N^{\frac{Koa(o^\tau -1)}{o-1}}$, which is polynomial in $N$.\\
    \emph{Pruning: }
    Solving the pruning part can be done using a linear programme, which can be solved in time polynomial w.r.t.\ the number of variables and constraints \cite{vaidya1989speeding}, whose numbers again depends polynomially on $N$, since the number of variables and constraints depends on the counted policy space $\boldsymbol{H}^\varPi$, which depends polynomially on $N$.\\
    Given that each step depends polynomially on $N$, the overall runtime depends polynomially on $N$.
\end{proof}

\section{DecTiger Example}
\label{app:tiger}
We provide the different model definitions, look at model and worst-case policy space sizes, and consider the differences when applying the dynamic programming operator.

\subsection{Model Definition}
We use the specification of the DecTiger benchmark from the MADP tool box. \footnote{\url{https://github.com/MADPToolbox/MADP/blob/master/problems/dectiger.dpomdp}}.
\Cref{lst:dectiger} shows the original DecTiger version in the MADP input format. 
The DecPOMDP model reads as follows:
\begin{itemize}[noitemsep]
	\item $\boldsymbol{I} = \{agent_1, agent_2\}$, 
	\item $S$, $\ran(S) = \{$\emph{tiger-left}, \emph{tiger-right}$\} = \{tl,tr\}$,
	\item $\boldsymbol{A} = \{A_i\}_{i\in \boldsymbol{I}}$, $\forall i \in \boldsymbol{I}: \ran(A_i) = \{$\emph{listen}, \emph{open-left}, \emph{open-right}$\} = \{li,ol,or\}$, and
	\item $\boldsymbol{O} = \{O_i\}_{i \in \boldsymbol{I}}$, $\forall i \in \boldsymbol{I} : \ran(O_i) = \{$\emph{hear-left}, \emph{hear-right}$\} = \{hl,hr\}$. 
\end{itemize}
with $T$, $R$, and $\varOmega$ (= O in \cref{lst:dectiger}) in tabular notation in full below (lines are reordered compared to the listing to match the order of the inputs in the definitions). 
\begin{center}
\begin{tabular}{cccll}
    \toprule
    $S$   & $A_1$   & $A_2$ & $T_{tl}$  & $T_{tr}$ \\
    \midrule
    $tl$  & $li$    & $li$  & $1$       & $0$   \\
    $tl$  & $li$    & $ol$  & $0.5$     & $0.5$ \\
    $tl$  & $li$    & $or$  & $0.5$     & $0.5$ \\
    $tl$  & $ol$    & $li$  & $0.5$     & $0.5$ \\
    $tl$  & $ol$    & $ol$  & $0.5$     & $0.5$ \\
    $tl$  & $ol$    & $or$  & $0.5$     & $0.5$ \\
    $tl$  & $or$    & $li$  & $0.5$     & $0.5$ \\
    $tl$  & $or$    & $ol$  & $0.5$     & $0.5$ \\
    $tl$  & $or$    & $or$  & $0.5$     & $0.5$ \\
    $tr$  & $li$    & $li$  & $0$       & $1$   \\
    $tr$  & $li$    & $ol$  & $0.5$     & $0.5$ \\
    $tr$  & $li$    & $or$  & $0.5$     & $0.5$ \\
    $tr$  & $ol$    & $li$  & $0.5$     & $0.5$ \\
    $tr$  & $ol$    & $ol$  & $0.5$     & $0.5$ \\
    $tr$  & $ol$    & $or$  & $0.5$     & $0.5$ \\
    $tr$  & $or$    & $li$  & $0.5$     & $0.5$ \\
    $tr$  & $or$    & $ol$  & $0.5$     & $0.5$ \\
    $tr$  & $or$    & $or$  & $0.5$     & $0.5$ \\
    \bottomrule
\end{tabular}\hfill
\begin{tabular}{cccr}
    \toprule
    $S$   & $A_1$   & $A_2$ & $R$   \\
    \midrule
    $tl$  & $li$    & $li$  & $-2$  \\
    $tl$  & $li$    & $ol$  & $-101$\\
    $tl$  & $li$    & $or$  & $9$   \\
    $tl$  & $ol$    & $li$  & $-101$\\
    $tl$  & $ol$    & $ol$  & $-50$ \\
    $tl$  & $ol$    & $or$  & $-100$\\
    $tl$  & $or$    & $li$  & $9$   \\
    $tl$  & $or$    & $ol$  & $-100$\\
    $tl$  & $or$    & $or$  & $20$  \\
    $tr$  & $li$    & $li$  & $-2$  \\
    $tr$  & $li$    & $ol$  & $9$   \\
    $tr$  & $li$    & $or$  & $-101$\\
    $tr$  & $ol$    & $li$  & $9$   \\
    $tr$  & $ol$    & $ol$  & $20$  \\
    $tr$  & $ol$    & $or$  & $-100$\\
    $tr$  & $or$    & $li$  & $-101$\\
    $tr$  & $or$    & $ol$  & $-100$\\
    $tr$  & $or$    & $or$  & $-50$ \\
    \bottomrule
\end{tabular}\hfill 
\begin{tabular}{cccllll}
    \toprule
    $S$   & $A_1$   & $A_2$ & $\varOmega_{hl,hl}$ & $\varOmega_{hr,hl}$ & $\varOmega_{hl,hr}$ & $\varOmega_{hr,hr}$ \\
    \midrule
    $tl$  & $li$    & $li$  & $0.7225$  & $0.1275$  & $0.1275$  & $0.0225$  \\
    $tl$  & $li$    & $ol$  & $0.25$    & $0.25$    & $0.25$    & $0.25$  \\
    $tl$  & $li$    & $or$  & $0.25$    & $0.25$    & $0.25$    & $0.25$  \\
    $tl$  & $ol$    & $li$  & $0.25$    & $0.25$    & $0.25$    & $0.25$  \\
    $tl$  & $ol$    & $ol$  & $0.25$    & $0.25$    & $0.25$    & $0.25$  \\
    $tl$  & $ol$    & $or$  & $0.25$    & $0.25$    & $0.25$    & $0.25$  \\
    $tl$  & $or$    & $li$  & $0.25$    & $0.25$    & $0.25$    & $0.25$  \\
    $tl$  & $or$    & $ol$  & $0.25$    & $0.25$    & $0.25$    & $0.25$  \\
    $tl$  & $or$    & $or$  & $0.25$    & $0.25$    & $0.25$    & $0.25$  \\
    $tr$  & $li$    & $li$  & $0.7225$  & $0.1275$  & $0.1275$  & $0.0225$  \\
    $tr$  & $li$    & $ol$  & $0.25$    & $0.25$    & $0.25$    & $0.25$  \\
    $tr$  & $li$    & $or$  & $0.25$    & $0.25$    & $0.25$    & $0.25$  \\
    $tr$  & $ol$    & $li$  & $0.25$    & $0.25$    & $0.25$    & $0.25$  \\
    $tr$  & $ol$    & $ol$  & $0.25$    & $0.25$    & $0.25$    & $0.25$  \\
    $tr$  & $ol$    & $or$  & $0.25$    & $0.25$    & $0.25$    & $0.25$  \\
    $tr$  & $or$    & $li$  & $0.25$    & $0.25$    & $0.25$    & $0.25$  \\
    $tr$  & $or$    & $ol$  & $0.25$    & $0.25$    & $0.25$    & $0.25$  \\
    $tr$  & $or$    & $or$  & $0.25$    & $0.25$    & $0.25$    & $0.25$  \\
    \bottomrule
\end{tabular}
\end{center}
with $T_{s'} = T(s',s,a_1,a_2), s' \in \{tl,tr\}$ and $\varOmega_{o_1,o_2} = \varOmega((o_1,o_2),s,a_1,a_2)$, $o_1, o_2 \in \{hl,hr\}$.
The transition function only states that as long as both agents only listen, the state does not change (identity).
When at least one agent opens a door, the game basically restarts with the new state being set according to a uniform distribution.
It is basically a way of keeping the game infinite by resetting the state to an arbitrary one whenever the agents end the game by opening a door (to either lose---tiger, or win---gold).
One could argue that opening a door only ends the game and might not necessarily imply a restart.
In that case, one would keeping the state as is in all cases ($(1,0)$ distribution for all $tl$ lines, $(0,1)$ distribution for all $tr$ lines) and re-spawn the game with an arbitrary starting state, sampled from a $(0.5,0.5)$ distribution, for do-overs. 

The DecTiger model has the same action and observation sets for both agents and exhibits a counting symmetry.
Thus, it can be rewritten into a counting model (\cref{def:decpomdp:counold}) with $K=1$.
We index the one partition with $c$. 
\begin{itemize}[noitemsep] 
	\item $\bar{\boldsymbol{I}} = \mathfrak{I}_c = \{agent_1, agent_2\}$, 
	\item $S = \{tl,tr\}$, 
    \item $\bar{A}_c = \{\#_{\mathfrak{I}_c}[A_c]\}$, where the local action range is $\ran(A_c) = \ran(A_i) = \{li,ol,or\}$, $A_i$ from the ground DecPOMDP definition,
	\item $\bar{T}(S', S, \bar{A}_c) = P(S' \mid S, \bar{A}_c)$, 
	\item $\bar{R}(S, \bar{A}_c)$, 
    \item $\bar{O}_c = \{\#_{\mathfrak{I}_k}[O_k]\}$, where the local observation range is $\ran(O) = \{hl,hr\}$. 
	\item $\bar{\varOmega}(\bar{O}_c, S) = P( \bar{O}_c \mid S)$
\end{itemize}
with $\bar{T}$, $\bar{R}$, and $\bar{\varOmega}$ following after the specification of the policy-counted DecPOOMDP version.

In a policy-counted DecPOMDP (\cref{def:decpomdp:coun}), the action and observation variables are plain variables again.
The CRVs only occur within the transition, reward, and sensor functions. 
Specifically, for the policy-counted DecTiger example, the model is defined as follows:
\begin{itemize}[noitemsep] 
	\item $\bar{\boldsymbol{I}} = \mathfrak{I}_c = \{agent_1, agent_2\}$, 
	\item $S = \{tl,tr\}$, 
    \item $\bar{A} = \{A_c\}$, $\ran(A_c) = \ran(A_i) = \{li,ol,or\}$,
	\item $\bar{T}(S', S, \bar{A}_c) = P(S' \mid S, \bar{A}_c)$, 
	\item $\bar{R}(S, \bar{A}_c)$, 
    \item $\bar{O} = \{O_k\}_{k=1}^K$, $\ran(O_c) = \{hl,hr\}$. 
	\item $\bar{\varOmega}(\bar{O}_c, S) = P( \bar{O}_c \mid S)$
\end{itemize}
with $\bar{A}_c$ and $\bar{O}_c$ defined as for the counting DecPOMDP above.

The functions $\bar{T}$, $\bar{R}$, and $\bar{\varOmega}$ are given below in tabular notation with $\#A$ short for $\#_{\mathfrak{I}_k}[A_k]$ (histogram positions: $[li,ol,or]$) and $\bar{\varOmega}_h$ for $\varOmega(S,\#A,\#_{\mathfrak{I}_k}[O_k]=h)$:
\begin{center}
\begin{tabular}{ccll}
\toprule
	$S$	& $\#A$		& $\bar{T}_{tl}$& $\bar{T}_{tr}$\\
\midrule
	$tl$	& $[2,0,0]$	& $1.0$	& $0.0$	\\
	$tl$	& $[1,1,0]$	& $0.5$	& $0.5$	\\
	$tl$	& $[1,0,1]$	& $0.5$	& $0.5$	\\
	$tl$	& $[0,1,1]$	& $0.5$	& $0.5$	\\
	$tl$	& $[0,2,0]$	& $0.5$	& $0.5$	\\
	$tl$	& $[0,0,2]$	& $0.5$	& $0.5$	\\
	$tr$	& $[2,0,0]$	& $0.0$	& $1.0$	\\
	$tr$	& $[1,1,0]$	& $0.5$	& $0.5$	\\
	$tr$	& $[1,0,1]$	& $0.5$	& $0.5$	\\
	$tr$	& $[0,1,1]$	& $0.5$	& $0.5$	\\
	$tr$	& $[0,2,0]$	& $0.5$	& $0.5$	\\
	$tr$	& $[0,0,2]$	& $0.5$	& $0.5$	\\
    \bottomrule
\end{tabular} \hfill
\begin{tabular}{ccr}
\toprule
	$S$	& $\#A$		& $\bar{R}$	\\
\midrule
	$tl$	& $[2,0,0]$	& $-2$	\\
	$tl$	& $[1,1,0]$	& $-101$	\\
	$tl$	& $[0,2,0]$	& $-50$	\\
	$tl$	& $[0,1,1]$	& $-100$	\\
	$tl$	& $[0,0,2]$	& $20$	\\
	$tl$	& $[1,0,1]$	& $9$	\\
	$tr$	& $[2,0,0]$	& $-2$	\\
	$tr$	& $[1,1,0]$	& $9$	\\
	$tr$	& $[0,2,0]$	& $20$	\\
	$tr$	& $[0,1,1]$	& $-100$	\\
	$tr$	& $[0,0,2]$	& $-50$	\\
	$tr$	& $[1,0,1]$	& $-101$	\\
\bottomrule
\end{tabular}\hfill 
\begin{tabular}{cclll}
\toprule
	$S$	& $\#A$		& $\bar{\varOmega}_{[2,0]}$& $\bar{\varOmega}_{[1,1]}$& $\bar{\varOmega}_{[0,2]}$   \\
\midrule
	$tl$	& $[2,0,0]$	& $0.7225$	& $0.1275$	& $0.0225$	\\
	$tl$	& $[1,1,0]$	& $0.25$		& $0.25$		& $0.25$		\\
	$tl$	& $[0,2,0]$	& $0.25$		& $0.25$		& $0.25$		\\
	$tl$	& $[0,1,1]$	& $0.25$		& $0.25$		& $0.25$		\\
	$tl$	& $[0,0,2]$	& $0.25$		& $0.25$		& $0.25$		\\
	$tl$	& $[1,0,1]$	& $0.25$		& $0.25$		& $0.25$		\\
	$tr$	& $[2,0,0]$	& $0.7225$	& $0.1275$	& $0.0225$	\\
	$tr$	& $[1,1,0]$	& $0.25$		& $0.25$		& $0.25$		\\
	$tr$	& $[1,0,1]$	& $0.25$		& $0.25$		& $0.25$		\\
	$tr$	& $[0,2,0]$	& $0.25$		& $0.25$		& $0.25$		\\
	$tr$	& $[0,1,1]$	& $0.25$		& $0.25$		& $0.25$		\\
	$tr$	& $[0,0,2]$	& $0.25$		& $0.25$		& $0.25$		\\
\bottomrule
\end{tabular}
\end{center}
Note that the rows to not add up to one because the histogram $[1,1]$ in $\bar{\varOmega}$ stands for two joint observations, $(hl,hr)$ and $(hr,hl)$, meaning that the probability for $\bar{\varOmega}_{[1,1]}$ has to be counted twice for the probability distribution to add up to $1$.
In general, a multinomial coefficient provides the number of inputs represented by a histogram, i.e., ${n_k!}/{\prod_l n_l!}$.

Before moving on to model and policy space sizes, we list the DecTiger specification in the MADP toolbox.

\begin{listing}[H]
\caption{DecTiger specification in the MADP toolbox (without the comments from the source)}
\label{lst:dectiger}
\begin{lstlisting}
agents: 2
discount: 1
values: reward
states: tiger-left tiger-right
start: uniform
actions: 
listen open-left open-right
listen open-left open-right
observations:
hear-left hear-right
hear-left hear-right
# Transition probabilities
T: * : uniform
T: listen listen : identity
# Observation probabilities: <2 actions> : <state> : <2 observations> : probability
O: * : uniform
O: listen listen : tiger-left : hear-left hear-left : 0.7225
O: listen listen : tiger-left : hear-left hear-right : 0.1275
O: listen listen : tiger-left : hear-right hear-left : 0.1275
O: listen listen : tiger-left : hear-right hear-right : 0.0225
O: listen listen : tiger-right : hear-left hear-left : 0.7225
O: listen listen : tiger-right : hear-left hear-right : 0.1275
O: listen listen : tiger-right : hear-right hear-left : 0.1275
O: listen listen : tiger-right : hear-right hear-right : 0.0225
# Rewards: <2 actions> : <state> : * : * : reward
R: listen listen: * : * : * : -2
R: open-left open-left : tiger-left : * : * : -50
R: open-right open-right : tiger-right : * : * : -50
R: open-left open-left : tiger-right : * : * : 20
R: open-right open-right : tiger-left : * : * : 20
R: open-left open-right : tiger-left : * : * : -100
R: open-left open-right : tiger-right : * : * : -100
R: open-right open-left : tiger-left : * : * : -100
R: open-right open-left : tiger-right : * : * : -100
R: open-left listen : tiger-left : * : * : -101
R: listen open-right : tiger-right : * : * : -101
R: listen open-left : tiger-left : * : * : -101
R: open-right listen : tiger-right : * : * : -101
R: listen open-right : tiger-left : * : * : 9
R: listen open-left : tiger-right : * : * : 9
R: open-right listen : tiger-left : * : * : 9
R: open-left listen : tiger-right : * : * : 9
\end{lstlisting}
\end{listing}

\subsection{Model and Policy Space Sizes}
The following table characterises the ground, counting, and policy-counted model $M$, $\bar{M}_c$, $\bar{M}$ regarding different parameters:
\begin{itemize}[noitemsep]
    \item Number of agents: $N$, number of partitions: $K$, horizon $\tau$
    \item State size: $s$
    \item Size of a local / partition action: $a$, that is $|\ran(A_i)|$, $|\ran(\#_{\mathfrak{I}}[A_c])|$, $|\ran(A_c)|$, respectively
    \item Size of the joint (counted) action: $\boldsymbol{a}$
    \item Size of a local / partition observation: $o$, that is $|\ran(O_i)|$, $|\ran(\#_{\mathfrak{I}}[O_c])|$, $|\ran(O_c)|$, respectively
    \item Size of the joint (counted) action: $\boldsymbol{o}$
    \item Size of local / partition policy space per state: $p$, i.e., local policy $\pi$ over $A_i, O_i$, partition policy over $\#_{\mathfrak{I}}[A_c]), \#_{\mathfrak{I}}[O_c])$, counted policy over representative policies over $A_k, O_k$
    \item Size of the joint (counted) policy space per state: $\boldsymbol{p}$
    \item Size of the transition, reward, and sensor functions by range sizes of the inputs: $T$, $R$, $\varOmega$
\end{itemize}
\begin{table}[H]
\centering
\begin{tabular}[t]{c c c c}
    \toprule
                        & $M$ & $\bar{M}_c$                                 & $\bar{M}$\\
     \midrule
     $s$                & $2$ & $2$                                         & $2$\\[1em]
     $a$                & $3$ & $\binom{N + 3 -1}{3-1} = \binom{N+2}{2}$    & $3$\\[1em]
     $\boldsymbol{a}$   & $3^N$ & $\binom{N + 2}{2}^K$                      & $\binom{N + 2}{2}^K$\\[1em]
     $o$                & $2$ & $\binom{N + 2 -1}{2-1} = \binom{N+1}{1}$    & $2$\\[1em]
     $\boldsymbol{o}$   & $2^N$ & $\binom{N + 1}{1}^K$                      & $\binom{N + 1}{1}^K$\\[1em]
     $p$                & $3^{\frac{2^\tau - 1}{2-1}}$ & $\binom{N + 2}{2}^{\frac{\binom{N + 1}{1}^\tau -1}{\binom{N + 1}{1} -1}}$                              & $\begin{pmatrix}{N + 3^{\frac{2^\tau}{2-1}} - 1}\\{3^{\frac{2^\tau}{2-1}} -1}\end{pmatrix}$\\[1em]
     $\boldsymbol{p}$   & $3^{N\frac{2^\tau - 1}{2-1}}$ & $\binom{N + 2}{2}^{K\frac{\binom{N + 1}{1}^\tau -1}{\binom{N + 1}{1} -1}}$                             & $\begin{pmatrix}{N + 3^{\frac{2^\tau}{2-1}} - 1}\\{3^{\frac{2^\tau}{2-1}} -1}\end{pmatrix}^K$\\[1.5em]
     $T$            & $2\cdot 2 \cdot 3^N$ & $2\cdot 2 \cdot \binom{N + 2}{2}^K$ & $2\cdot 2 \cdot \binom{N + 2}{2}^K$\\[1em]
     $R$            & $2 \cdot 3^N$     & $2 \cdot \binom{N + 2}{2}^K$ & $2 \cdot \binom{N + 2}{2}^K$\\[1em]
     $\varOmega$    & $2 \cdot 2^N$     & $2 \cdot \binom{N+1}{1}^K$    & $2 \cdot \binom{N+1}{1}^K$\\
     \bottomrule
\end{tabular}
\end{table}

\subsection{Dynamic Programming}
We consider the three steps of dynamic programming (backup, value calculation, pruning) for the levels $t=0$ and $t=1$ for the three versions of the DecTiger example.

\subsubsection{Level $t=0$}
\paragraph{Backup: }
The operator for the ground version builds \emph{three} policies for each of its \underline{two} agents, each policy having a single (root) node for one of the actions $li,ol,or$, building six policies overall.
A counting programming operator for the counting version would build \emph{six} policies for its \underline{one} partition, each with a single (root) node for one of the action histograms $[2,0,0],\allowbreak [1,1,0],\allowbreak [1,0,1],\allowbreak [0,1,1],\allowbreak [0,2,0],\allowbreak [0,0,2]$.
The counted programming operator for the policy-counted version builds \emph{three} representative policies for its \emph{one} partition, each with a single (root) node for one of the actions $li,ol,or$.
With two available agents, there are six counted policies over the three representative ones: $[2,0,0],\allowbreak [1,1,0],\allowbreak [1,0,1],\allowbreak [0,1,1],\allowbreak [0,2,0],\allowbreak [0,0,2]$.

\paragraph{Value Computation: }
For the ground version, the operator calculates the values for each of its \emph{two} agents for each of its \underline{three} policies.
For each policy, the computation has to be performed for each state $tl,tr$ and each of the other agent's three policies.
For the counting version, the operator calculates the values for its \emph{one} partition for each of its \emph{six} policies.
For each policy, the computation has to be performed for each state $tl,tr$.
For the policy-counted version, the operator calculates the values for its \emph{one} partition for each of its \emph{six} counted policies.
For each policy, the computation has to be performed for each state $tl,tr$.

\paragraph{Pruning: }
In the ground version, the operator checks for each of the \emph{two} agents, whether one of its \underline{three} policies is (weakly) dominated by one of the other two policies for each state $tl,tr$ and each of the three policies of the other agent (does not need to be the same policy over all states and policies of the other agent), until there are no more policies to prune.
In the counting version, the operator would check for its \emph{one} partition, whether any of its \underline{six} policies is (weakly) dominated by one of the other five policies for each state $tl,tr$, until there are no more policies to prune.
In the policy-counted version, the operator checks for its \emph{one} partition, whether any of its \underline{three} representative policies is dominated by one of the other two policies for each state $tl,tr$ and each of the other counted policies over the remaining partition, which contains only one agent, i.e., $[1,0,0],\allowbreak [0,1,0],\allowbreak [0,0,1]$ (that is just the three representative policies), until there are no more policies to prune.

\paragraph{Comment: } 
The difference between the counting and policy-counted versions is not as pronounced at this level, with the ground version also working within the same order of magnitude.

\subsubsection{Level $t=1$}
\paragraph{Backup: }
With no policies pruned, the operator for the ground version builds for its \emph{two} agent each \underline{$27$} policies out of its previous three policies and two possible observations $hl,hr$, with each policy having a root node and two children, one for each observation.
The actions in the nodes are each possible combination of $li,ol,or$.
With no policies pruned, the operator for the counting version would build for its \emph{one} partition \emph{$216$} policies out of its previous six policies and three possible counted observations $[0,2],[1,1],[2,0]$.
The policies would have a root node and three children, one for each possible histogram, with the nodes containing each possible combination of the six action histograms.
The operator for the policy-counted version builds the same \underline{$27$} (representative) policies out of its previous three representative policies and two possible observations $hl,hr$ that the ground version has.
With two available agents, there are then $378$ counted policies.

\paragraph{Value Computation: }
For the ground version, the operator performs the same calculations outlined for level $t=0$ for its $27$ policies for each combination of state $tl,tr$ and policy of the other agent ($27$ as well) for each agent.
For the counting version, the operator calculates the values for its one partition for each of its $216$ policies for each state $tl,tr$.
For the policy-counted version, the operator calculates the values for its one partition for each of the $378$ counted policies.
For each policy, the computation has to be performed for each state $tl,tr$.

\paragraph{Pruning: }
Pruning again follows the same procedure as outlined above, for {two} agents and each of their {$27$} policies in each state and for each of the other agent's policies in the ground version.
In the counting version, the {$1296$} policies of its one partition get checked for each state $tl,tr$.
In the policy-counted version, the {$27$} representative policies of its one partition are checked for each state and each counted policy of the remaining partition, i.e., the $27$ representative policies.

\paragraph{Comment: } 
This numerical example illustrates impressively that policy counting only pays off if the number of agents is large or at least outweighs the number of representative policies.
That is the number of agents should at least be an order larger than the number of representative actions and observations available.

\end{document}